\newcommand\hl{\bgroup\markoverwith
  {\textcolor{yellow}{\rule[-.5ex]{2pt}{2.5ex}}}\ULon}
\newcommand\hlr{\bgroup\markoverwith
  {\textcolor{red}{\rule[-.5ex]{2pt}{2.5ex}}}\ULon}
\newcommand\hlg{\bgroup\markoverwith
  {\textcolor{green}{\rule[-.5ex]{2pt}{2.5ex}}}\ULon}
\newtheorem{theorem}{Theorem}
\newtheorem{proposition}[theorem]{Proposition}
\newtheorem{definition}{Definition}
\newtheorem{corollary}[theorem]{Corollary}
\newtheorem{lemma}[theorem]{Lemma}
\newtheorem{example}{Example}
\renewcommand{\Sigma}{\Gamma}
\begin{document}
\title{On the Measure of the Conflicts: A MUS-Decomposition Based Framework}

\author{Said Jabbour$^{1}$, Yue Ma$^{2}$,  Badran Raddaoui$^{1}$, Lakhdar Sa\"is$^{1}$, Yakoub Salhi$^{1}$\\
$^1$CRIL - CNRS, Universit\'e d'Artois, France\\
\texttt{\{jabbour, raddaoui, sais, salhi\}@cril.fr} \\
\\
$^2$ Technische Universit\"at Dresden,  Institut f\"ur theoretische Informatik, Dresden, Germany\\
\texttt{mayue@tcs.inf.tu-dresden.de} 
}

\maketitle

\begin{abstract}

Measuring inconsistency is viewed as an important issue related to handling inconsistencies. Good measures are supposed to  satisfy a set of rational properties.
However,  defining sound properties is sometimes problematic. In this paper, we emphasize one such property, named \textit{Decomposability}, rarely discussed in the literature due to its modeling difficulties.
To this end, we propose an independent decomposition which is more intuitive than existing proposals.
To analyze inconsistency in a more fine-grained way, we introduce a graph representation of a knowledge base and various MUS-decompositions. One particular MUS-decomposition, named {\em distributable MUS-decomposition} leads to an interesting partition of inconsistencies in a knowledge base such that multiple experts can check inconsistencies in parallel, which is impossible under existing measures. Such particular {\em MUS-decomposition} results in an inconsistency measure that satisfies a number of desired properties. Moreover, we give an upper bound  complexity of the measure that can be computed using 0/1 linear programming or Min Cost Satisfiability problems, and conduct preliminary experiments to show its feasibility.
\end{abstract}

\section{Introduction}\label{sec:intro}

Conflicting information is often unavoidable for  large-sized knowledge bases (KBs for short). Thus, analyzing conflicts has 
gained a considerable attention in Artificial Intelligence research \cite{BertossiHS05}. In the same vein, measuring inconsistency has proved useful and attractive in diverse scenarios, including
software specifications \cite{Ana04}, e-commerce protocols \cite{ChenZZ04}, belief merging \cite{QiLB05}, news reports \cite{Hunter06}, integrity constraints \cite{GrantH06}, requirements engineering \cite{Ana04}, databases \cite{MartinezPSSP07,GrantH13}, semantic web \cite{ZhouHQMHQ09}, and network intrusion detection \cite{McAreaveyLMM11}.

Inconsistency measuring is helpful to compare different knowledge bases and to evaluate their quality \cite{Grant78}. 
A number of logic-based inconsistency measures  have been studied, including  the maximal $\eta$-consistency \cite{Knight02}, measures based on variables or via multi-valued models \cite{Grant78,Hunter02,Oller04,Hunter06,GrantH08,MaQXHL10,XiaoLMQ10,MaQH11}, n-consistency and n-probability \cite{DoderRMO10}, minimal inconsistent subsets based inconsistency measures \cite{HunterK08,MuLJ11,MuLJ12,XiaoM12},  Shapley inconsistency value \cite{HunterK06,HunterK10}, and more recently the inconsistency measurement based on minimal proofs \cite{JabbourR13}. 

There are different ways to categorize the proposed measures. One  way is with respect to their dependence on syntax or semantics: Semantic based ones  aim to compute the proportion of the language that is affected by the inconsistency, via for example paraconsistent semantics.
Whilst, syntax based ones are concerned  with the minimal number of formulae that cause inconsistencies, often through minimal inconsistent subsets. Different measures can also be classified by being formula or knowledge base oriented. For example,  the inconsistency measures in \cite{HunterK06,HunterK10} consist in quantifying the contribution of a formula to the inconsistency of a whole knowledge base containing it, while the other mentioned measures aim to quantify the inconsistency degree a the whole knowledge base.
Some basic properties \cite{HunterK10} such as {\it Consistency, Monotony, Free Formula Independence}, are also proposed to evaluate the quality of   inconsistency measures. 

In this paper, we propose a syntax-based framework to measure inconsistencies\footnote{It can be embedded into   Shapley Inconsistency Value to have a formula oriented measure \cite{HunterK06}.}  using a novel methodology allowing to resolve inconsistencies in a parallel way. To this end,   \emph{distributable MUS-decomposition} and \emph{distribution index} of a KB are introduced. Intuitively, a distributable MUS-decomposition gives a reasonable partition of a KB such that  it allows multiple experts to solve  inconsistencies
in parallel; And the distribution index is the maximal components that a KB 
can be partitioned into. This methodology is of great importance in a scenario where the information in a KB is precious, large, and complex such that removing or weakening information requires intensive and time-consuming interactions with human experts. 
Consider $K=\{a_1, \neg a_1, a_1\vee \neg a_2, a_2, \neg a_2, \cdots, a_{n-1}, \neg a_{n-1}, a_{n-1}\vee \neg a_{n}, a_{n}, \neg a_{n}\}$. 
Intuitively,   $K$ contains a large number of inconsistencies. And interestingly,  our approach can recognize $\{a_i, \neg a_i\}$ as $n$ distributable parts of $K$ such that each expert can focus on verifying a single part carefully and independently\footnote{More details are explained later in the paper.}. 
 In contrast, classical  approaches follow the idea of 
resolving inconsistency  as a whole without being able to break a KB into independent pieces. Take,  for example, the classical Hitting Set approach which identifies a minimal set of formulae, e.g. $\{\neg a_i \mid 1\leq i\leq n\}$ of $K$, to  remove for restoring consistency. Note that $K$ has many such Hitting Sets of  a big size $n$. Therefore, even if working in parallel, each expert needs to verify a large number of formulae, which is time consuming. More problematic in general,  there are often overlaps among  Hitting sets so that multiple experts have to waste time in unnecessarily rechecking the overlaps. This is the same if we simply distribute one minimal inconsistent subsets to an expert. However, the proposed distributable MUS-decomposition avoids  this problem because it gives a disjoint decomposition of a KB.
The methodology is inspired and a side-product of our exploration of the \emph{decomposition} property defined for inconsistency measures, which is rarely discussed in the literature due to its modeling difficulty \cite{HunterK10}.

Our technical contributions are as follows:
\begin{itemize}
	\item  We propose  \emph{independent decomposability} as a more reasonable characterization of inconsistency measures.
	\item We define a \emph{graph representation} of KBs to analyze connections between minimal inconsistent subsets by exploiting the structure of the graph.  Such a representation is then used to improve an existing inconsistency measure to satisfy the independent decomposability.
	\item Based on the graph representation, a series of \emph{MUS-decompositions} are introduced and used for defining the \emph{distribution-based inconsistency measure $I_{\cal D}$}.
	We show the interesting properties of $I_{\cal D}$ and give a comparison with other measures, which indicates its rationality. 
	\item We study the complexity of $I_{\cal D}$ (via an  \emph{extended set packing problem}) and we provide encodings as a 0/1 linear program or min cost satisfiability for its computation. 
\end{itemize}
 
The paper is organized as follows: Sections \ref{sec:preliminaries}  and \ref{sec:MIV} give basis notions and recall some inconsistency measures  relevant to the present work. In Section \ref{sec:partition}, we propose a graph representation of a KB and use it to revise an existing measure. 
 Section \ref{sec:KBmeasure} focuses on \emph{ MUS-decomposition} and \emph{distribution-based inconsistency measure}.  Section \ref{sec:computation} gives the complexity results of the proposed measure and its computation algorithms whose efficiency is evaluated in Section \ref{sec:experiments}. 
Section \ref{sec:conclusion}  concludes the paper with some perspectives.


\section{Preliminaries}\label{sec:preliminaries}
Through this paper, we consider the propositional language $\mathcal{L}$ built over a finite set of propositional symbols $\mathcal{P}$ using 
classical logical connectives $\{ \neg, \wedge, \vee, \rightarrow\}$.
We will use letters such as $a$ and $b$ to denote propositional variables, Greek letters like $\alpha$ and $\beta$ to denote propositional 
formulae. The symbols $\top$ and $\bot$ denote tautology and contradiction, respectively. 

A knowledge base $K$ consists of a finite set of propositional formulae. Sometimes, a propositional formula can be in conjunctive normal form (CNF) i.e. a conjunction of clauses. Where a clause is a disjunction literals, and a literal is either a propositional variable ($x$) or its negation ($\neg x$). 
For a set $S$, $\vert S \vert$ denotes its cardinality. 
Moreover, a KB $K$ is inconsistent if there is a formula $\alpha$ such that $K \vdash \alpha$ and  $K \vdash \neg\alpha$, where $\vdash$ is the deduction in classical propositional logic.
If $K$ is inconsistent,  \textit{Minimal Unsatisfiable Subsets (MUS)} of $K$ are defined as follows:

\begin{definition}[MUS]
Let $K$ be a KB and $M\subseteq K$.
$M$ is a minimal unsatisfiable (inconsistent) subset (MUS) of $K$ iff $M \vdash\bot$ and  $\forall M' \subsetneq M$, $M'\nvdash\bot$. 
The set of all minimal unsatisfiable subsets of $K$ is denoted
$\mathit{MUSes(K)}$. 
\end{definition}

Clearly, an inconsistent KB $K$ can have multiple minimal inconsistent subsets.
When a $\mathit{MUS}$ is singleton,  the single formula in it, is called a \textit{self-contradictory formula}.
We denote the set of self-contradictory formulae of $K$ by $selfC(K) = \{\alpha\in K ~ \vert ~ \{\alpha\}\vdash\bot \}$.  A formula $\alpha$ that is not involved in any MUS of $K$ is called \textit{free formula}.
The set of free formulae of $K$ is written 
$free(K)=\{\alpha \mid  \mbox{ there is no } M\in MUSes(K) \mbox{ such that }\alpha\in M\}$, and  its complement is named \textit{unfree formulae} set, defined as $unfree(K) = K\setminus free(K)$.
Moreover,  the \textit{Maximal Consistent Subset} and  \textit{Hitting set}  are defined as follows: 

\begin{definition}[MSS]
Let $K$ be a KB and $M$ be a subset of $K$.
$M$ is a maximal satisfiable (consistent) subset (MSS) of $K$ iff $M \nvdash\bot$ and $\forall \alpha \in  K\setminus M$, $M \cup\{\alpha\} \vdash\bot$. The set  of all maximal satisfiable subsets is denoted
$\mathit{MSSes(K)}$.

\end{definition}

\begin{definition}\label{def3}
Given a universe $U$ of elements and a collection ${\cal S}$ of subsets of $U$, $H\subseteq U$ is a hitting set of ${\cal S}$ if $\forall E \in {\cal S}, H \cap E \neq \emptyset$. 
 $H$ is a minimal hitting set of ${\cal S}$ if $H$ is a hitting set of ${\cal S}$ and each $H'\subset H$ is not a hitting set of ${\cal S}$. 
\end{definition}

\section{Inconsistency Measures}\label{sec:MIV}

We review the inconsistency measures relevant to the ones proposed in this paper.

There have been several contributions for measuring inconsistency in knowledge bases defined through minimal inconsistent subsets theories.
In \cite{HunterK10}, Hunter and Konieczny introduce a scoring function allowing to measure the degree of inconsistency of a subset of formulae of a given knowledge base.
In other words, for a subset $K' \subseteq K$, the scoring function is defined as the reduction of the number of minimal inconsistent subsets obtained by removing $K'$ from $K$ (i.e. $\vert \mathit{MUSes(K)} \vert - \vert \mathit{MUSes(K-K')}\vert$). 
By extending the scoring function, the authors introduce an inconsistency measure of the whole base, defined as the number of minimal inconsistent subsets of $K$.
Formally, $I_{MI}(K) = \vert \mathit{MUSes(K)} \vert.$\\
$I_{MI}$ measure also leads to an interesting Shapley Inconsistency Value $S_\alpha^{I_{MI}}$  with desirable properties \cite{HunterK10}.  

Combining both  minimal inconsistent subsets and  maximal consistent subsets is another way to define inconsistency degree \cite{MuLJB11,GrantH11}.
We consider the inconsistency value $I_M(K)$ that 
counts for a given KB, the number of its $MSSes$ and its Self-contradictory formulae (subtraction of 1 is required to make $I_M(K)=0$ when $K$ is consistent):
	$$I_M(K) = |MSSes(K) | + |selfC(K)| -1.$$
Another inconsistency measure considered in this paper is defined as the minimum hitting set of $MUSes(K)$: $$\delta_{hs}(K) = min \{|H|~| H \mbox{ is a hitting set of }MUSes(K)\}.$$
$\delta_{hs}(K)$ is the size of the smallest hitting set of $MUSes(K)$ w.r.t. its cardinality. 

In addition, a set of properties  have been proposed to characterize an inconsistency measure. 

\begin{definition}[\cite{HunterK10}]
\label{BIM} Given two knowledge bases $K$ and $K'$, and formulae $\alpha$ and $\beta$ in ${\cal L}$,
\begin{itemize}
	\item[(1)] Consistency: $I(K) = 0$ iff $K$ is consistent
	\item[(2)] Monotony: $I(K)  \le I(K \cup K')  $
	\item[(3)] Free Formula Independence: if $\alpha$ is a free formula in $K \cup \{\alpha \}$, then $I(K \cup \{\alpha\}) = I(K) $
	\item[(4)]  MinInc: If  $M\in MUSes(K),$ then $I(M)=1$.
\end{itemize}
\end{definition}
The monotony property shows that the inconsistency value of a KB increases with the addition of new formulae.
The free formula independence property  states that the set of formulae not involved in any minimal inconsistent subset does not influence the inconsistency measure. 
The MinInc is used to characterize the Shapley Inconsistency Value by $I_{MI}(K)$ in \cite{HunterK08}.

\section{Independent Decomposability Property}\label{sec:partition}
 
There are common properties that we examine for an inconsistency measure  (Definition \ref{BIM}), while leaving another property, called \textit{Decomposability} or \textit{Additivity}, debatable  due to its modelling difficulty ~\cite{HunterK08}. Indeed, properties in Definition \ref{BIM}  have an inspiring root from the axioms of Shapley Value \cite{Shap53}.  As mentioned in \cite{LR57}, one of the main limitation of the original additivity lies in the fact that the interactions of sub-games are not considered. Moreover, \cite{HK06} argue that a direct translation of Shapley's additivity has little sense for inconsistency measures.
For this reason, \textit{Pre-Decomposability} and Decomposability are defined \cite{HunterK10} for formula-oriented inconsistency measures.  

In this section, we analyze the limitation of existing decomposability property and propose an \textit{Independent Decomposability} which is  more intuitive. We then derive a new measure  $I'_M$  by modifying $I_M$ to satisfy the independent decomposability property by considering the  interactions between MUSes through \emph{MUS-graph representation} of a KB.

Let us recall Pre-decomposability and Decomposability properties \cite{HunterK10}.  
 
\begin{definition}[Pre-Decomposability\footnote{It is named  MinInc Separability in \cite{HunterK08}}]\label{def:additivity} 
Let $K_1,\ldots,K_n$ be knowledge bases and
$I$ an inconsistency measure. 
$I$ satisfies Pre-Decomposability if it satisfies the following condition:
If $\mathit{MUSes(K_1\cup\ldots\cup K_n)} = \mathit{MUSes(K_1)} \oplus \ldots \oplus$
$\mathit{MUSes(K_n)}$\footnote{We denote a partition $\{A,B\}$ of a set $C$ by $C = A \oplus B$, i.e., $C = A \cup B$ and $A\cap B = \emptyset.$}, 
then $I(K_1\cup\ldots\cup K_n) = I(K_1) + \ldots + I(K_n)$.
\end{definition}

Pre-Decomposability ensures that the inconsistency degree of a KB $K$ can be obtained by summing up the degrees of its sub-bases $K_i$ under the condition that $\{MUSes(K_i)\mid 1\leq i\leq n\}$ is a partition of $MUSes(K)$. 

\begin{definition}[Decomposability]\label{def:additivity}
$I$ satisfies Decomposability if it satisfies the following condition:
If $|\mathit{MUSes(K_1\cup\ldots\cup K_n)}| = \sum_{1\leq i \leq n}|\mathit{MUSes(K_i)}|$, then
$I(K_1\cup\ldots\cup K_n) = I(K_1) + \ldots + I(K_n)$.
\end{definition}

Compared to Pre-Decomposability, Decomposability characterizes a weaker condition that consider only MUSes cardinalities of $K$ and $K_i$. 
Although Pre-Decomposability and Decomposability can characterize some kind of interactions. We argue that this condition is not sufficient. Let us consider the following example:

\begin{example}\label{exam:enh-additivity}
Let $K_1=\{a,\neg a\}, K_2=\{\neg a,a\wedge b\}, K_3=\{c,\neg c\}$, each of which  contains only one single MUS. Consider two bases 
$K=K_1 \cup K_2, K'= K_1 \cup K_3$.
Clearly, $MUSes(K)$= $MUSes(K_1)$ $\oplus$ $ MUSes(K_2)$, and $MUSes(K')=MUSes(K_1)\oplus MUSes(K_3)$.
For any measure $I$, if $I$ satisfies the decomposability property (Definition \ref{def:additivity}),  we have $I(K)=I(K_1)+I(K_2)$ and $I(K')=I(K_1)+I(K_3)$. Moreover, if $I$ satisfies the MinInc property. 
Then,  K and K' will have the same value, which is counter-intuitive because the components of $MUSes(K')= \{\{a, \neg a\}, \{c, \neg c\}\}$ are unrelated, whereas those of $MUSes(K)=\{\{a, \neg a\}, \{\neg a, a\wedge\neg b \}\}$ are overlapping. Consequently, the components of $MUSes(K')$ are more spread than those of  $MUSes(K)$.
One can expect that $K'$ should contain more inconsistencies than $K$.
\end{example}

This example illustrates the necessity to characterize the interactions among sub-bases whose inconsistency measures can be summed up.
To this end, we propose the following independent decomposability property:

\begin{definition}[Independent  Decomposability]\label{def:enh-additivity}
Let $K_1,$ $\ldots,$ $K_n$ be knowledge bases and $I$ an inconsistency measure. 
If $\mathit{MUSes(K_1\cup\ldots\cup K_n)} = \mathit{MUSes(K_1)} \oplus \ldots \oplus$
$\mathit{MUSes(K_n)}$ and $unfree(K_i) \cap unfree(K_j)=\emptyset $ for all $1\leq i\not=j\leq n$, then 
$I(K_1\cup\ldots\cup K_n) = I(K_1) + \ldots + I(K_n)$.
$I$ is then called ind-decomposable.
\end{definition}

To perform additivity for a given measure, the independent decomposability requires an additional precondition expressing that pairwise sub-bases should not share unfree formulae, which encodes a stronger independence among sub-bases.
Indeed, the independent decomposability avoids the counter-intuitive conclusion illustrated in Example \ref{exam:enh-additivity}.
To illustrate this,  suppose that $I$ satisfies independent decomposability, then we have $I(K')=I(K_1)+I(K_3)$, but not necessarily  $I(K')=I(K_1)+I(K_2)$ as $\mathit{MUSes(K_1)}$ and $\mathit{MUSes(K_2)}$ share the formula $\neg a$.
Hence $I(K)$ can be different from $I(K')$. 

Clearly, the following relations hold among different decomposability conditions.

\begin{proposition}
Decomposability implies Pre-Decomposability; Pre-Decomposability implies Independent Decomposability. 
\end{proposition}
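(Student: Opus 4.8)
The plan is to prove the two implications separately, each by showing that the hypothesis of the implied property forces the hypothesis of the implying property, so that the common conclusion $I(K_1\cup\ldots\cup K_n)=I(K_1)+\ldots+I(K_n)$ carries over. In both cases the conclusion is identical across the three properties; only the triggering preconditions differ, so the entire argument reduces to comparing these preconditions.

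For the first implication, suppose $I$ satisfies Decomposability and assume the Pre-Decomposability hypothesis, namely $\mathit{MUSes(K_1\cup\ldots\cup K_n)}=\mathit{MUSes(K_1)}\oplus\ldots\oplus\mathit{MUSes(K_n)}$. I would observe that a disjoint-union decomposition of a finite set immediately yields the corresponding additive relation on cardinalities: $|\mathit{MUSes(K_1\cup\ldots\cup K_n)}|=\sum_{1\leq i\leq n}|\mathit{MUSes(K_i)}|$. This is exactly the hypothesis of Decomposability, so by assumption $I(K_1\cup\ldots\cup K_n)=I(K_1)+\ldots+I(K_n)$, which is the Pre-Decomposability conclusion. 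Hence Decomposability implies Pre-Decomposability.

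For the second implication, suppose $I$ satisfies Pre-Decomposability and assume the Independent Decomposability hypothesis: $\mathit{MUSes(K_1\cup\ldots\cup K_n)}=\mathit{MUSes(K_1)}\oplus\ldots\oplus\mathit{MUSes(K_n)}$ together with $\mathit{unfree}(K_i)\cap\mathit{unfree}(K_j)=\emptyset$ for all $i\neq j$. Here I would simply note that the Independent Decomposability hypothesis is strictly stronger: it already contains the partition condition $\mathit{MUSes(K_1\cup\ldots\cup K_n)}=\mathit{MUSes(K_1)}\oplus\ldots\oplus\mathit{MUSes(K_n)}$ that triggers Pre-Decomposability, and merely adds the extra disjointness requirement on unfree formulae. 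Discarding the additional precondition, the remaining condition is precisely that of Pre-Decomposability, which therefore delivers the same additive conclusion. Thus Pre-Decomposability implies Independent Decomposability.

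The proof is essentially a chain of precondition-weakening observations, so there is no substantial technical obstacle; the only point requiring care is the direction of the implications, which runs opposite to the strength of the preconditions. A stronger property (imposing a stronger conclusion under a weaker hypothesis, i.e. firing more often) implies the weaker one precisely because whenever the weaker property's hypothesis holds, the stronger property's hypothesis also holds and forces the same conclusion. I would make explicit that Decomposability fires under the broadest class of hypotheses (cardinality equality alone), Pre-Decomposability under a narrower class (genuine MUS-partition), and Independent Decomposability under the narrowest (partition plus unfree-disjointness), which is why the implications cascade in the stated order.
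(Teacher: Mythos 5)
Your proof is correct and takes the same route the paper leaves implicit (the proposition is stated there with only a ``clearly'' and no written proof): each implication holds because the triggering hypothesis of the implied property entails that of the assumed property---a disjoint MUS-partition trivially yields the cardinality equality $|\mathit{MUSes}(K_1\cup\ldots\cup K_n)|=\sum_{i}|\mathit{MUSes}(K_i)|$, and the Independent Decomposability hypothesis contains the partition condition outright---so the common additive conclusion transfers. Your closing remark correctly handles the one delicate point, namely that the implications run opposite to the strength of the preconditions.
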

Indeed, as shown by Example \ref{exam:enh-additivity}, the strong constraints of Pre-Decomposability and Decomposability would make an inconsistency measure behavior counter-intuitive. 
In contrast, the independence between sub-bases required in the independent decomposability property make it more intuitive. 

While we can see that the measure $I_{MI}$  is pre-decomposable, decomposable, and ind-decomposable, it is not the case for $I_M$ measure as shown below.

\begin{proposition}\label{prop:decom-relation}
The measure $I_M$ is not pre-decomposable, neither decomposable and nor ind-decomposable.
\end{proposition}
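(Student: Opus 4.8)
The plan is to exploit the implication chain established in the preceding proposition: since Decomposability implies Pre-Decomposability, and Pre-Decomposability implies Independent Decomposability, its contrapositive tells us that failing the \emph{weakest} of the three notions --- Independent Decomposability --- automatically entails failing the other two. Hence it suffices to produce a single counterexample that meets the (most demanding) hypothesis of Independent Decomposability yet violates additivity of $I_M$. This is the economical route, and it also pins down which property is really at stake.

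The concrete witness I would use is $K_1 = \{a, \neg a\}$ and $K_2 = \{b, \neg b\}$ over disjoint variables. First I would verify the hypothesis of the Independent Decomposability definition: each $K_i$ has a single MUS, namely $MUSes(K_1) = \{\{a, \neg a\}\}$ and $MUSes(K_2) = \{\{b, \neg b\}\}$, and because the two contradictions share no variable we get $MUSes(K_1 \cup K_2) = MUSes(K_1) \oplus MUSes(K_2)$. Moreover $unfree(K_1) = \{a, \neg a\}$ and $unfree(K_2) = \{b, \neg b\}$ are disjoint, so both preconditions of Independent Decomposability hold.

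Next I would compute the three $I_M$ values. For each singleton-MUS base we have $|MSSes(K_i)| = 2$ and $selfC(K_i) = \emptyset$, giving $I_M(K_1) = I_M(K_2) = 2 + 0 - 1 = 1$. For the union, a maximal consistent subset must independently pick one literal from each contradictory pair, so $MSSes(K_1 \cup K_2)$ consists of the four sets $\{a, b\}$, $\{a, \neg b\}$, $\{\neg a, b\}$, $\{\neg a, \neg b\}$, whence $I_M(K_1 \cup K_2) = 4 + 0 - 1 = 3$. Since $3 \neq 1 + 1$, additivity fails although the Independent Decomposability hypothesis is satisfied; thus $I_M$ is not ind-decomposable, and by the contrapositive of the proposition it is neither pre-decomposable nor decomposable.

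The only real subtlety --- and the step I would be most careful about --- is ensuring the counterexample defeats the \emph{weakest} property rather than merely the strongest one. The multiplicative blow-up of $|MSSes|$ across independent components (here $2 \times 2 = 4$ instead of $2 + 2$) is exactly what breaks additivity, and the disjoint-variable construction is chosen precisely so that the extra \emph{disjoint-unfree} precondition of Independent Decomposability also holds, closing off the escape route that a less careful example would leave open.
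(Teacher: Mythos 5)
Your proposal is correct and takes essentially the same route as the paper: the paper uses the identical counterexample $K_1=\{a,\neg a\}$, $K_2=\{b,\neg b\}$, computes the same values $I_M(K_1\cup K_2)=3\neq 2=I_M(K_1)+I_M(K_2)$, and concludes that all three properties fail. The only cosmetic difference is that you obtain non-pre-decomposability and non-decomposability via the contrapositive of the implication chain from the preceding proposition, whereas the paper simply observes that this example satisfies the hypotheses of all three decomposability notions simultaneously, so the single additivity failure refutes each directly.
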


\begin{proof} 
Consider the counter example: $K_1 = \{a, \neg a\}$, $K_2 = \{b, \neg b\}$ and $K = K_1\cup K_2$.
It is easy to check that $K$ and $K_i ~(i=1,2)$ satisfy the conditions of Pre-Decomposability, Decomposability, and Independent Decomposability.
We have $I_M(K_1 \cup K_2) = 3$ while $I_M(K_1) + I_M(K_2) = 2$.
Consequently, $I_M(K_1)+I_M(K_2) \neq I_M(K_1 \cup K_2)$.
Thus, $I_M$ is not pre-decomposable, neither decomposable and nor ind-decomposable.
\end{proof}

Indeed, the following theorem states that under certain constraints, $MSS$ is multiplicative instead of additive.

\begin{theorem}
Let $K=K_1\cup\cdots{}\cup K_n$ be KBs such that $MUSes(K_1\cup\cdots{}\cup K_n) = MUSes(K_1)\oplus\cdots{}\oplus MUSes(K_n)$ and, 
for all $1 \leq i,j\leq n$ with $i\neq j$, $K_i\cap K_j=\emptyset$.
Then, $M\in MSSes(K)$ iff $M=M_1\cup\cdots{}\cup M_n$ where $M_1\in MSSes(K_1),\ldots{}, M_n\in MSSes(K_n)$.
\end{theorem}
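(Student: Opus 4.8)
The plan is to prove both directions by exploiting two structural consequences of the hypotheses. First, since the $K_i$ are pairwise disjoint and cover $K$, every $S\subseteq K$ decomposes as $S=\bigcup_i (S\cap K_i)$ into pairwise disjoint pieces; in particular each formula of $K$ lies in exactly one component $K_i$, and for any such decomposition one has $S\cap K_i=S_i$ when $S=\bigcup_j S_j$ with $S_j\subseteq K_j$. Second, the partition condition $MUSes(K)=MUSes(K_1)\oplus\cdots\oplus MUSes(K_n)$ together with disjointness guarantees that every minimal inconsistent subset of $K$ lies entirely inside a single $K_i$: since each element of $MUSes(K_i)$ is by definition a subset of $K_i$, and $MUSes(K)$ is their union, no MUS of $K$ straddles several components. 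These two facts are the workhorses of the argument.

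For the direction $(\Leftarrow)$, I would take $M=M_1\cup\cdots\cup M_n$ with each $M_i\in MSSes(K_i)$ and verify the two defining conditions of an MSS. For consistency I argue by contradiction: were $M\vdash\bot$, it would contain some $M'\in MUSes(K)$; by the second fact $M'\subseteq K_i$ for some $i$, so $M'\subseteq M\cap K_i=M_i$, contradicting the consistency of $M_i$. For maximality, given $\alpha\in K\setminus M$ I locate the unique $K_i$ with $\alpha\in K_i$; then $\alpha\in K_i\setminus M_i$, so maximality of $M_i$ in $K_i$ gives $M_i\cup\{\alpha\}\vdash\bot$, whence $M\cup\{\alpha\}\vdash\bot$ by monotonicity of entailment.

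For the direction $(\Rightarrow)$, starting from $M\in MSSes(K)$ I define $M_i=M\cap K_i$, so that $M=M_1\cup\cdots\cup M_n$ is immediate from the covering property, and each $M_i$ is consistent as a subset of the consistent set $M$. The real work is the maximality of $M_i$ inside $K_i$. Given $\alpha\in K_i\setminus M_i$ we have $\alpha\notin M$, so maximality of $M$ in $K$ yields only the global inconsistency $M\cup\{\alpha\}\vdash\bot$. The crux is to localize this: extract a minimal inconsistent subset $M'\subseteq M\cup\{\alpha\}$, which is a genuine element of $MUSes(K)$; since $M$ is consistent we must have $\alpha\in M'$, and by the second fact $M'\subseteq K_j$ for some $j$, forcing $j=i$ because $\alpha$ sits in a unique component. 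Hence $M'\subseteq M_i\cup\{\alpha\}$, giving $M_i\cup\{\alpha\}\vdash\bot$, as required.

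I expect the main obstacle to be precisely this localization step in $(\Rightarrow)$: passing from the global inconsistency $M\cup\{\alpha\}\vdash\bot$ to the local one $M_i\cup\{\alpha\}\vdash\bot$. It is resolved by observing that the witnessing MUS is forced by the partition hypothesis to live inside $K_i$ and to contain $\alpha$, so it already lies within $M_i\cup\{\alpha\}$. Everything else is a routine unwinding of the MSS and MUS definitions combined with the disjointness of the components.
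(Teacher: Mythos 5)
Your proof is correct, and it differs from the paper's in organization rather than in substance. The paper proceeds by induction on $n$: it sets $K'=K_1\cup\cdots\cup K_{n-1}$, invokes the induction hypothesis, and handles the resulting two-block case, whereas you argue directly for arbitrary $n$ via the canonical decomposition $M_i=M\cap K_i$. Both arguments ultimately rest on the same load-bearing fact, which you isolate explicitly as your second structural fact: the hypothesis $MUSes(K)=MUSes(K_1)\oplus\cdots\oplus MUSes(K_n)$, together with pairwise disjointness, forces every MUS of $K$ to lie entirely inside a single component. The paper uses this fact only implicitly---in both directions it simply asserts statements such as ``using $MUSes(K'\cup K_n)=MUSes(K')\oplus MUSes(K_n)$ and $K'\cap K_n=\emptyset$, $M\cup\{\alpha\}$ is consistent'' without spelling out why---whereas your MUS-extraction argument (every inconsistent subset of the finite base $K$ contains some $M'\in MUSes(K)$, which must live in one $K_i$ and, since $M$ is consistent, must contain $\alpha$, hence localizes to $M_i\cup\{\alpha\}$) supplies exactly the detail the paper elides; this is indeed the crux, as you anticipated. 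What each approach buys: the paper's induction keeps the notation to two blocks at a time and is shorter on the page; your direct route avoids the induction bookkeeping and is more self-contained, since the localization step is proved rather than asserted. One small point worth a sentence in a polished write-up: extracting a MUS from an inconsistent set relies on $K$ being a finite set of formulae (as the paper's definition of a KB guarantees), which ensures every inconsistent subset contains a minimal inconsistent one.
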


\begin{proof}
By induction on $n$.
The case of $n=1$ is trivial.
We now consider the case of $n> 1$. 
Let $K'=K_1\cup \cdots{}\cup K_{n-1}$. Using induction hypothesis, we have $M'\in MSSes(K')$ iff $M=M_1\cup\cdots{}\cup M_{n-1}$ where $M_1\in MSSes(K_1),\ldots{}, M_n\in MSSes(K_{n-1})$.\\
{\it Part $\Rightarrow$}.
Let $M\in MSSes(K'\cup K_n)$.
Then, there exist $M'\subseteq K'$ and $M_n\subseteq K_n$ such that $M=M'\cup M_n$.
If $M'\notin MSSes(K')$ (resp. $M_n\notin MSSes(K_n)$) then there exists $\alpha\in (K'\cup K_n)\setminus M$ such that 
$M'\cup\{\alpha\}$ (resp. $M_n\cup\{\alpha\}$) is consistent.
Using $MUSes(K'\cup K_n)= MUSes(K')\oplus MUSes(K_n)$
and $K'\cap K_n=\emptyset$, $M\cup \{\alpha\}$ is consistent and we get a contradiction.
Therefore, $M'\in MSSes(K')$ and $M_n\in MSSes(K_n)$.\\
{\it Part $\Leftarrow$}.  Let $M'\in MSSes(K')$ and $M_n\in MSSes(K_n)$.
Then, the set $M=M'\cup M_n$ is consistent, 
since we have $M'\cap M_n =\emptyset$ and $MUSes (K'\cup K_n)= MUSes(K')\oplus MUSes(K_n)$.
Let us now show that $M$ is in $MSSes(K'\cup K_n)$.
Assume that $M$ is not in  $MSSes(K'\cup K_n)$. 
Then, there exists $\alpha\in (K'\cup K_n)\setminus M$ such that $M\cup \{\alpha\}$ is consistent. 
If $\alpha\in K'$ (resp. $\alpha\in K_n$), then $M'\cup \{\alpha\}$ (resp. $M_n\cup\{\alpha\}$) is consistent and we get a contradiction. 
Therefore, $M$ is in $MSSes(K'\cup K_n)$.
\end{proof}

Using this theorem, we deduce the following corollary:

\begin{corollary}
\label{prop:indmcs}
Let $K=K_1\cup\cdots{}\cup K_n$ be KBs such that $MUSes(K_1\cup\cdots{}\cup K_n) = MUSes(K_1)\oplus\cdots{}\oplus MUSes(K_n)$ and, 
for all $1\leq i,j\leq n$ with $i\neq j$, $K_i\cap K_j=\emptyset$.
Then, $|MSSes(K)|= |MSSes(K_1)|\times\cdots{}\times |MSSes(K_n)|$.
\end{corollary}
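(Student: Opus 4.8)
The plan is to derive this counting corollary directly from the preceding theorem, which gives a bijective characterization of the maximal consistent subsets of $K$ in terms of those of the components $K_1,\ldots,K_n$. Since the corollary shares exactly the same hypotheses as the theorem, I would simply invoke the theorem and then count.

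First I would observe that the theorem establishes that the map sending a tuple $(M_1,\ldots,M_n)$ with $M_i\in MSSes(K_i)$ to the union $M_1\cup\cdots\cup M_n$ is a well-defined function into $MSSes(K)$ (this is the $\Leftarrow$ direction), and that every element of $MSSes(K)$ arises in this way (the $\Rightarrow$ direction). So I would set up the product set $MSSes(K_1)\times\cdots\times MSSes(K_n)$ and argue that the union map is a bijection onto $MSSes(K)$. Surjectivity is immediate from the $\Rightarrow$ part of the theorem.

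The one point that genuinely needs the disjointness hypothesis $K_i\cap K_j=\emptyset$ is injectivity of the union map: I must check that distinct tuples produce distinct unions. Here I would use that $M_i\subseteq K_i$ and the $K_i$ are pairwise disjoint, so each component can be recovered from the union by intersecting with $K_i$, that is $M_i = M\cap K_i$. This shows the union map is injective, completing the bijection. Then the cardinality identity $|MSSes(K)| = |MSSes(K_1)|\times\cdots\times|MSSes(K_n)|$ follows from the standard fact that a bijection between finite sets preserves cardinality and that the cardinality of a Cartesian product is the product of cardinalities.

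I do not expect any real obstacle here, since the substantive work is already done in the theorem; the corollary is essentially a bookkeeping step. The only mild subtlety to state carefully is the recovery formula $M_i=M\cap K_i$, which is what makes the decomposition of each $M\in MSSes(K)$ unique and hence turns the set-valued characterization of the theorem into a clean product count. An alternative, slightly more pedestrian route would be a direct induction on $n$ mirroring the structure of the theorem's proof, using $|MSSes(K'\cup K_n)| = |MSSes(K')|\times|MSSes(K_n)|$ at each step; but the bijection argument is cleaner and I would present that.
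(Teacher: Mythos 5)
Your proposal is correct and matches the paper's intent: the paper states the corollary as an immediate consequence of the preceding theorem without spelling out the counting step, and your bijection argument (surjectivity from the $\Rightarrow$ direction, injectivity from the recovery formula $M_i = M \cap K_i$ enabled by the disjointness of the $K_i$) is exactly the natural way to fill in that deduction. No gaps; the emphasis on where disjointness is used is a welcome clarification.
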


As the Independent Decomposability gives a more intuitive characterization of the interaction among subsets, in the following, we are interested in  restoring  the independent decomposability property of the $I_M$ measure. 

Let us first define two fundamental concepts: \textit{MUS-graph} and \textit{MUS-decomposition}.
 
\begin{definition}[MUS-graph]
\label{def:gmus}
The MUS-graph of $K$ of a KB $K$, denoted ${\cal G}_{{\cal MUS}}(K)$, is an undirected graph where:
\begin{itemize}
\item  $MUSes (K)$ is the set of vertices; and  
\item $\forall M, M' \in \mathit{MUSes(K)}$, $\{M,M'\}$ is an edge iff  $M \cap  M' \neq \emptyset$.  
\end{itemize}
\end{definition}

A MUS-graph of $K$ gives us a structural representation of the connection between minimal unsatisfiable subsets. 

\begin{example}
\label{ex:graph}
Let $K = \{a \wedge d, \neg a, \neg b, b \vee \neg c, \neg c \wedge d, \neg c \vee e, c, \neg e, e \wedge d\}$.
We have $MUSes(K) =\{M_1,\dots, M_5\}$ where 
$M_1 = \{\neg a , a \wedge d\}$,
$M_2 = \{c, \neg b, b \vee \neg c\}$,
$M_3 = \{c, \neg c\wedge d\}$,  $M_4 = \{\neg c \vee e, c, \neg e\}$, and
$M_5 = \{ \neg e, e \wedge d\}$. 
So ${\cal G}_{{\cal MUS}}(K)$ is as follows:

\begin{figure}[!h]
\begin{center}
\includegraphics[width=4cm]{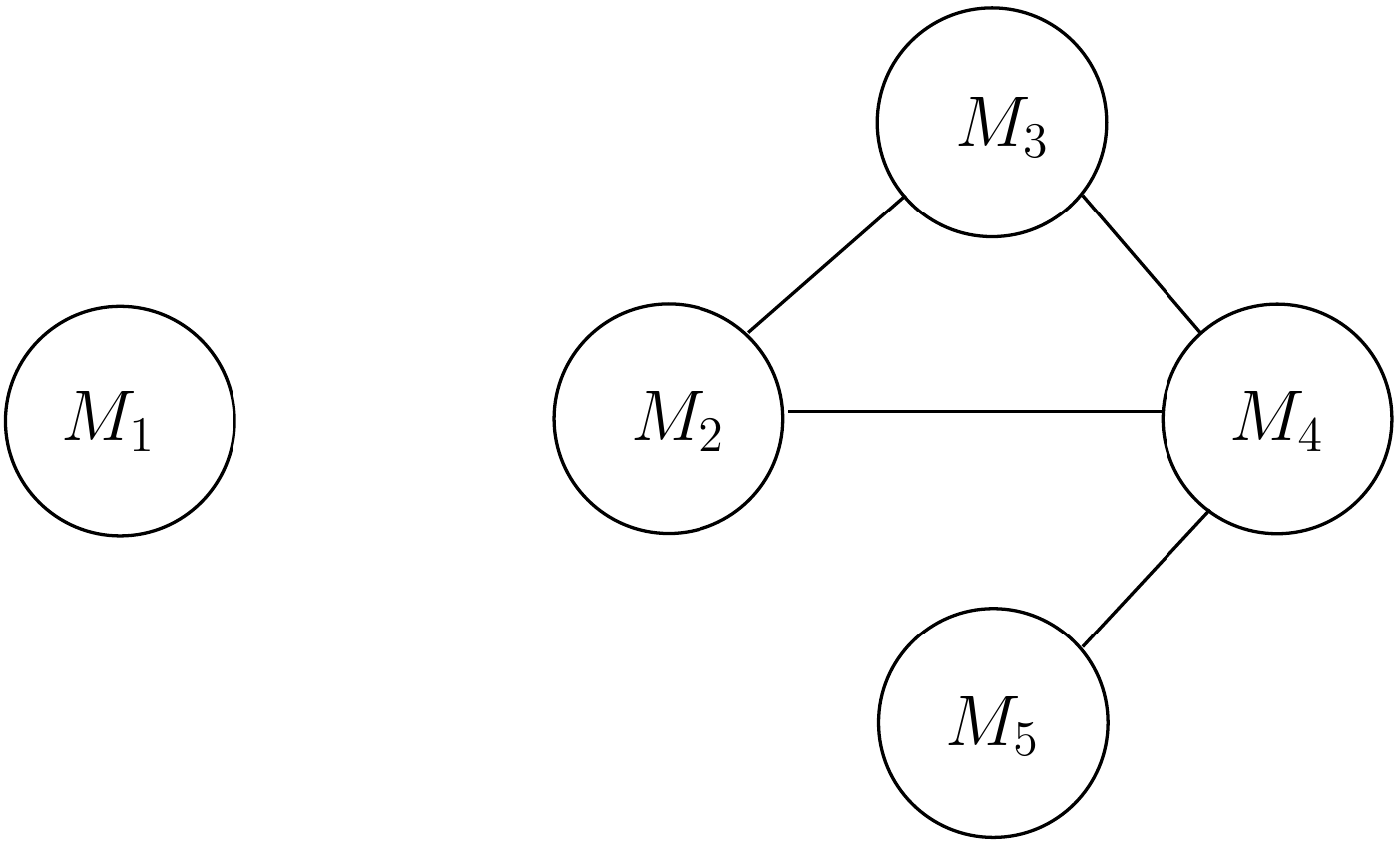} 
\caption{${\cal G}_{{\cal MUS}}(K)$: MUS-graph  of $K$}
\label{fig:musgr}
\end{center}
\end{figure}

\end{example}

Moreover,  ${\cal G}_{{\cal MUS}}(K)$ leads to  a partition 
of a KB $K$, named \textit{MUS-decomposition},  as defined below. 

\begin{definition}[MUS-decomposition]
\label{def:musdec}
A MUS-decomposition of $K$ is a set $\{K_1,\ldots{},K_p\}$ such that 
$K=K_1\cup\dots\cup K_p\cup free(K)$ and  $MUSes(K_i)~(1\leq i\leq p)$ are the connected components of ${\cal G}_{{\cal MUS}}(K)$.


\end{definition}

By the fact that $MUSes(K)\not=\emptyset$ and the uniqueness of the connected components of a graph, we can easily see:

\begin{proposition}
MUS-decomposition exists and is unique for an inconsistent KB.
\end{proposition}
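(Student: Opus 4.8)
The plan is to prove existence by an explicit canonical construction from the connected components of ${\cal G}_{{\cal MUS}}(K)$, and then to obtain uniqueness directly from the uniqueness of those components. First I would note that since $K$ is inconsistent, $MUSes(K)\neq\emptyset$, so the graph ${\cal G}_{{\cal MUS}}(K)$ has at least one vertex and therefore decomposes into a nonempty, uniquely determined collection of connected components $C_1,\ldots,C_p$, each $C_i$ being a set of MUSes. For each $i$ I would set $K_i=\bigcup_{M\in C_i}M$, the set of all formulae occurring in the MUSes of the $i$-th component.

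Next I would check that this family satisfies the two clauses of Definition~\ref{def:musdec}. For the covering condition, observe that every unfree formula $\alpha$ lies in some MUS $M$, and that all MUSes containing $\alpha$ fall in a single component: if $\alpha\in M$ and $\alpha\in M'$ then $M\cap M'\neq\emptyset$, so $\{M,M'\}$ is an edge and $M,M'$ share a component. Hence the assignment of an unfree formula to a component is well defined, the sets $K_i$ are pairwise disjoint, and $\bigcup_i K_i=unfree(K)$; together with $K=unfree(K)\cup free(K)$ this yields $K=K_1\cup\cdots\cup K_p\cup free(K)$.

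The substantive step is to verify $MUSes(K_i)=C_i$. The inclusion $C_i\subseteq MUSes(K_i)$ is immediate: each $M\in C_i$ is inconsistent and minimally so, and $M\subseteq K_i$, so minimality is inherited and $M\in MUSes(K_i)$. For the converse, take $M'\in MUSes(K_i)$. Since $K_i\subseteq K$, $M'$ is an inconsistent subset of $K$ all of whose proper subsets are consistent, so $M'\in MUSes(K)$, i.e. $M'$ is a vertex of ${\cal G}_{{\cal MUS}}(K)$. Picking any $\alpha\in M'$, by construction $\alpha$ belongs to some $M\in C_i$, whence $M'\cap M\neq\emptyset$ and $M'$ is adjacent to a vertex of $C_i$; maximality of the connected component $C_i$ then forces $M'\in C_i$. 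This is the step I expect to be the main obstacle, since it is where one must rule out both ``spurious'' MUSes straddling two components and MUSes being lost, and it is exactly here that the definition of the edge relation ($M\cap M'\neq\emptyset$) and the maximality of connected components are used together.

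Finally, uniqueness follows with little extra work: the connected components of any graph are the equivalence classes of the reachability relation and are therefore uniquely determined, so the partition $\{C_1,\ldots,C_p\}$ is unique, and since each $K_i=\bigcup_{M\in C_i}M$ is determined by $C_i$, the decomposition $\{K_1,\ldots,K_p\}$ is unique as well.
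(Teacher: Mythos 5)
Your proposal is correct and follows essentially the same route as the paper, which justifies the proposition in one line by exactly the two facts you use: $MUSes(K)\neq\emptyset$ since $K$ is inconsistent, and the uniqueness of the connected components of ${\cal G}_{{\cal MUS}}(K)$. Your write-up is in fact more careful than the paper's, since you explicitly construct $K_i=\bigcup_{M\in C_i}M$ and verify $MUSes(K_i)=C_i$ (including that no MUS can straddle two components), steps the paper leaves implicit.
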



\begin{example}(Example \ref{ex:graph} contd.)
The MUS-decomposition of $K$ contains two components of   ${\cal G}_{{\cal MUS}}(K)$: $K_1 = M_1$ and
$K_2 = M_2\cup M_3\cup M_4\cup M_5$ by noting that $free(K)=\emptyset$. 

\end{example}

Obviously, the MUS-decomposition of a KB can be computed in polynomial time given its MUS-graph.
Interestingly, we can see that the partition $\{K_1,\dots, K_p, free(K)\}$ satisfies the application conditions of Independent Decomposability. 
That is, if an inconsistency measure $I$ is ind-decomposable and free-formula independent,
then $I(K)=I(K_1)+\dots+I(K_p)$.
%

In the following, based on MUS-decomposition, we present an alternative to the inconsistency measure $I_M$ (defined in Section \ref{sec:preliminaries}) so as to make it ind-decomposable. 



\begin{definition}
Let $K$ be a KB with its MUS-decomposition  $K=\{K_1,\dots,K_p\}$.
The $I'_M$  measure is defined as follows:
$$
I'_M(K) = \left\{
    \begin{array}{ll}
      \hspace{-1mm} \sum\limits_{1\le i\le p} |MSSes(K_i)| + |selfC(K)| & \mbox{if } K \vdash\bot; \\
        \\[-6pt]
        \mathit{0 }& \mbox{otherwise}.
    \end{array}
\right.
$$
\end{definition}

That is, instead of $MSSes(K)$ as in $I_M$, the maximal consistent subsets of MUS-decomposition of $K_i$ are used in $I_M'$. 

\begin{example}(Example \ref{ex:graph} contd.)
\label{ex:partitionmus} We have $MSSes(K_1) = \{ 
\{a \wedge d\}, \{ \neg a\}\}$ and $MSSes(K_2) = 
\{\{ \neg b, b \vee \neg c, \neg c \wedge d, \neg c \vee e, e \wedge d\},
\{ \neg b, b \vee \neg c, \neg c \wedge d, \neg c \vee e,  \neg e\},
\{b \vee \neg c, \neg c \vee e, c, e \wedge d\},
\{ \neg b, \neg c \vee e, c, e \wedge d\},
\{b \vee \neg c, c, \neg e\},
\{ \neg b, c, \neg e\}
\}$. Then $I'_M(K) = 2+6 = 8$. 
\end{example}

\begin{proposition}\label{prop2}
$I'_M$  measure is  ind-decomposable.
\end{proposition}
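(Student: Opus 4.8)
The plan is to show that the two preconditions of independent decomposability force the MUS-graph of $K=K_1\cup\cdots\cup K_n$ to split cleanly along the given partition, so that evaluating $I'_M$ on $K$ reduces, component by component, to evaluating it on each $K_i$. First I would fix an arbitrary decomposition satisfying $MUSes(K)=MUSes(K_1)\oplus\cdots\oplus MUSes(K_n)$ and $unfree(K_i)\cap unfree(K_j)=\emptyset$ for $i\neq j$, and prove the key structural lemma: there is no edge of ${\cal G}_{{\cal MUS}}(K)$ joining a MUS $M\in MUSes(K_i)$ to a MUS $M'\in MUSes(K_j)$ with $i\neq j$. Indeed, such an edge means $M\cap M'\neq\emptyset$; any common formula $\alpha$ lies in the MUS $M\subseteq K_i$ and in the MUS $M'\subseteq K_j$, hence $\alpha\in unfree(K_i)\cap unfree(K_j)$, contradicting the second precondition.

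Next I would combine this lemma with the first precondition to describe the connected components. Since $MUSes(K)$ is partitioned into $MUSes(K_1),\ldots,MUSes(K_n)$, and the edge relation of ${\cal G}_{{\cal MUS}}(K)$ restricted to $MUSes(K_i)$ is exactly that of ${\cal G}_{{\cal MUS}}(K_i)$ (both test whether two MUSes intersect), the subgraph induced on $MUSes(K_i)$ is precisely ${\cal G}_{{\cal MUS}}(K_i)$; by the lemma there are no edges across distinct parts. Hence the connected components of ${\cal G}_{{\cal MUS}}(K)$ are the union, over $i$, of the connected components of the ${\cal G}_{{\cal MUS}}(K_i)$. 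Consequently, writing $\{K_{i,1},\ldots,K_{i,p_i}\}$ for the MUS-decomposition of $K_i$ (each $K_{i,j}$ being the union of the formulae of one component of ${\cal G}_{{\cal MUS}}(K_i)$), the family $\{K_{i,j}\mid 1\le i\le n,\ 1\le j\le p_i\}$ is the MUS-decomposition of $K$, and each piece $K_{i,j}$ is the \emph{same} set of formulae whether read off ${\cal G}_{{\cal MUS}}(K)$ or ${\cal G}_{{\cal MUS}}(K_i)$, so the quantities $|MSSes(K_{i,j})|$ coincide in both views.

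It then remains to treat the self-contradictory term. I would show $selfC(K)=selfC(K_1)\oplus\cdots\oplus selfC(K_n)$: each singleton MUS $\{\alpha\}$ of $K$ belongs to exactly one $MUSes(K_i)$ by the first precondition, whence $\alpha\in selfC(K_i)$; conversely $selfC(K_i)\subseteq selfC(K)$ since $K_i\subseteq K$; and the $selfC(K_i)$ are pairwise disjoint, since a formula lying in two of them would be unfree in both, again contradicting the second precondition. Thus $|selfC(K)|=\sum_i|selfC(K_i)|$. Substituting both facts into the definition of the measure gives $I'_M(K)=\sum_{i,j}|MSSes(K_{i,j})|+|selfC(K)|=\sum_i\left(\sum_j|MSSes(K_{i,j})|+|selfC(K_i)|\right)=\sum_i I'_M(K_i)$, as required, with consistent $K_i$ simply contributing $0$.

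The main obstacle I anticipate is not any single computation but the careful bookkeeping in the second step: one must verify that the induced subgraph of ${\cal G}_{{\cal MUS}}(K)$ on $MUSes(K_i)$ genuinely equals ${\cal G}_{{\cal MUS}}(K_i)$ and that each connected component yields the identical formula-subset in both graphs, so that the $MSSes$ counts transfer unchanged. Once that identification is pinned down, the no-cross-edge lemma and the additivity of $selfC$ both follow immediately from the two preconditions, and the additive conclusion drops out.
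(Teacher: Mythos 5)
Your proposal is correct and takes essentially the same route as the paper's proof: both argue that the two preconditions force ${\cal G}_{{\cal MUS}}(K_1\cup\cdots\cup K_n)$ to be the disjoint union of the graphs ${\cal G}_{{\cal MUS}}(K_i)$, so that the MUS-decomposition of $K$ is exactly the union of the MUS-decompositions of the $K_i$, after which the $|MSSes(\cdot)|$ and $|selfC(\cdot)|$ terms sum up. The only difference is one of detail: you explicitly prove the no-cross-edge claim (a shared formula would lie in $unfree(K_i)\cap unfree(K_j)$) and the pairwise disjointness of the $selfC(K_i)$, both of which the paper asserts without justification.
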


\begin{proof}

Let $K=\bigcup\limits_{1\leq i\leq n} K_i$ be a KB such that $MUSes(K)= \oplus_{1\leq i\leq n} MUSes(K_i)$ and, for all $1\leq i, j\leq n$ with $i\neq j$, $unfree(K_i)\cap~unfree(K_j)=\emptyset$.
One can easily see that $I'_M(K)=0$ if and only if, for all $1\leq i\leq n$, $I'_M(K_i)=0$.
We now consider the case of $I'_M(K)> 0$.
We denote by ${\cal C}(K_i)$ the set of connected components in ${\cal G}_{{\cal MUS}}(K_i)$ for $i=1,\ldots{},n$.
Thus, $\bigcup\limits_{1\leq i\leq n} {\cal C}(K_i)$ is the set of connected components in $ {\cal G}_{{\cal MUS}}(K)$, 
since $ {\cal G}_{{\cal MUS}}(K)=\biguplus\limits_{1\leq i\leq n}{\cal G}_{{\cal MUS}}(K_i)$\footnote{The union of  disjoint graphs is the graph with the union of vertex and edge sets from  individual graphs as its vertexes and edges.}.  
Moreover, it is obvious that  $selfC(K)=\bigcup\limits_{1\leq i\leq n} selfC(K_i)$.
Let $\{K_i^1, \ldots{}, K_i^{p_i}, free (K_i)\}$ be the MUS-decomposition of $K_i$ for $i=1,\ldots{}, n$.
We have $I'_M(K)= \sum\limits_{1\le i\le n}(\sum\limits_{1\le j\le p_i} |MSSes(K^{j}_{i})|+|selfC(K_i)|)= \sum\limits_{1\le i\le n} I'_M(K_i)$, since 
$(\bigcup\limits_{1\leq i\leq n}\{ K_i^1, \ldots{}, K_i^{p_i}\})\cup(\bigcup\limits_{1\leq i\leq n} free(K_i))$ is the MUS-decomposition of $K$. 

\end{proof}

That is, by taking into account the connections between minimal inconsistent subsets,  MUS-decomposition gives us a way to define an inconsistency measure which still satisfies the Independent Decomposability.

%
%
\section{A New MUS-based Inconsistency Measure}\label{sec:KBmeasure}

Recall that we want to have a way to resolve inconsistencies in a parallel way as mentioned in Section \ref{sec:intro}.
Indeed, MUS-decomposition defines a disjoint partitions of a KB.
However, it is inadequate for this purpose.
Consider again $K=\{a_1, \neg a_1, a_1\vee \neg a_2, a_2, \neg a_2, \cdots, a_{n-1}, \neg a_{n-1}, a_{n-1}\vee \neg a_{n}, a_{n}, \neg a_{n}\}$.
The MUS-decomposition can not divide $K$ into smaller pieces because its MUS-graph contains only one connected component.
A solution to this problem is via a more fine-grained analysis of a MUS-graph by taking into account its inner structures.
To this end, we propose \textit{partial} and \textit{distributable} MUS-decompositions, based on which a new inconsistency measure is proposed and shown having more interesting properties.

Let us first study  a general characterization of inconsistency measures with respect to the Independent Decomposability property.

\begin{definition}\label{def:dimk}
Let $K$ be a KB, $\{ K_1, \ldots, K_p\}$ the MUS-decomposition of $K$ and 
$\delta$ a function from $\{K_1, \ldots, K_p\}$ to $\mathbb{R}$.
The MUS-decomposition based inconsistency measure of $K$ with respect to  $\delta$, denoted $I_{{\cal D}}^\delta(K)$, is defined as follows:
$$I_{{\cal D}}^\delta(K) = \displaystyle\sum^{p}_{i=1} ~ {\delta}(K_i)$$
\end{definition}

A range of possible measures can be defined using the above general definition.
Let us review some existing instances of  $I_{{\cal D}}^\delta$  according to some  $\delta$ functions.
The simplest one is obtained when ${\delta}(K_i)=1$.
In this case, we get a measure  that assigns to $K$ the number of its connected components.
However, this measure in not monotonic.
Indeed, adding new formulae to a KB can decrease the number of connected components.
For instance, consider the KB $K = \{a, \neg a, b, \neg b\}$ that contains two singleton connected components 
$K_1 = \{\{a, \neg a\}\}$ and $K_2 = \{\{b, \neg b\}\}$. Now, adding the formula $a \vee b$ to $K$ leads to a new KB containing a unique connected component
$K = \{\{a, \neg a\}, \{b, \neg b\}, \{\neg a, a \vee b, \neg b\}\}$. 
Besides, this simple measure considers each connected component as an inseparable entity.

Moreover, when we take $\delta(K_i) = |K_i|$  
(the number of $MUSes$  involved in the connected component $K_i$), $I_{{\cal D}}^\delta(K)$ is equal to $I_{MI}$ measure i.e. $I_{{\cal D}}^\delta(K) =  |\mathit{MUSes(K)}|$. 
This measure again does not take into account the inner structure of minimal inconsistent subsets of a $K$. 

\subsection{(Maximal) Partial MUS-decomposition}

We now modify $I_{{\cal D}}^\delta$ to take into account interactions between MUSes. 
In particular, we  deeply explore the Independent Decomposability and the Monotony properties to define a new inconsistency measure, while keeping other desired properties satisfied.
To this end, we  first introduce the partial MUS-decomposition  notion.

\begin{definition}[Partial MUS-decomposition]\label{def:ccpartition}
Let $K$ be a KB and $K_1,\ldots{},K_n$ subsets of $K$. 
The set $\{K_1,\ldots, K_n\}$ is called a partial  MUS-decomposition of $K$  if the following conditions are satisfied:
	\begin{itemize}
		\item[(1)] $K_i\vdash\bot,\ \ $  for  $1\leq i\leq n$;
		\item[(2)] $\mathit{MUSes(K_1 \cup\ldots\cup K_n)} = \bigoplus_{1\leq i\leq n} 	\mathit{MUSes(K_i)}$;
		\item[(3)] $K_i\cap K_j = \emptyset$, $\forall ~i\neq j$.
	\end{itemize}
	We denote  $pMUSd(K)$ the  set of partial MUS-decompositions of $K$.
\end{definition}

The following proposition comes from the fact that the MUS-decomposition of a KB $K$ is 
in $pMUSd(K)$.

\begin{proposition}
Any inconsistent KB  has at least one partial MUS-decomposition. 
\end{proposition}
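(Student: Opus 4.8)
The plan is to exhibit, for any inconsistent KB $K$, a concrete element of $pMUSd(K)$, thereby establishing nonemptiness (and hence the existence of at least one partial MUS-decomposition). The natural candidate is precisely the MUS-decomposition of $K$ guaranteed to exist by the earlier proposition, and the work consists in verifying that it meets the three defining conditions of Definition~\ref{def:ccpartition}.

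First I would recall that since $K$ is inconsistent we have $MUSes(K)\neq\emptyset$, so the MUS-decomposition $\{K_1,\ldots,K_p\}$ exists and is unique, where each $MUSes(K_i)$ is a connected component of ${\cal G}_{{\cal MUS}}(K)$ and $K=K_1\cup\cdots\cup K_p\cup free(K)$. I would then check condition~(1): each $K_i$ arises from a nonempty connected component, so it contains at least one MUS, whence $K_i\vdash\bot$. For condition~(2), the key observation is that the vertex sets $MUSes(K_i)$ partition $MUSes(K)$ because they are the connected components of the MUS-graph, so $MUSes(K)=\bigoplus_{1\le i\le p}MUSes(K_i)$; one should also argue that $MUSes(K_1\cup\cdots\cup K_p)=MUSes(K)$, which holds because removing only free formulae does not create or destroy any MUS.

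The step I expect to be the main (and only nontrivial) obstacle is condition~(3), the pairwise disjointness $K_i\cap K_j=\emptyset$ for $i\neq j$. This is where the edge relation of the MUS-graph does the essential work: if a formula $\alpha$ belonged to both $K_i$ and $K_j$, it would lie in some MUS $M\subseteq K_i$ and some MUS $M'\subseteq K_j$ with $\alpha\in M\cap M'$, so $M\cap M'\neq\emptyset$, making $\{M,M'\}$ an edge of ${\cal G}_{{\cal MUS}}(K)$; but then $M$ and $M'$ would lie in the same connected component, contradicting the fact that $MUSes(K_i)$ and $MUSes(K_j)$ are distinct components. Here I would take care to argue that each $K_i$ really is (or may be taken to be) the union of the MUSes in its component together with no shared free formulae, so that membership in $K_i$ forces membership in some MUS of that component.

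Having verified conditions~(1)--(3) for the MUS-decomposition, I would conclude that the MUS-decomposition of $K$ belongs to $pMUSd(K)$, and therefore $pMUSd(K)\neq\emptyset$, which is exactly the claim. The proof is thus short, resting entirely on the structural correspondence between shared formulae and edges of the MUS-graph, with the disjointness condition being the only point requiring genuine (though brief) justification.
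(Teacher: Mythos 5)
Your proof is correct and takes essentially the same route as the paper, which simply observes (without further detail) that the MUS-decomposition of $K$ is itself an element of $pMUSd(K)$. Your explicit verification of conditions (1)--(3) of Definition~\ref{def:ccpartition} --- in particular deriving pairwise disjointness of the $K_i$ from the fact that a shared formula would create an edge between MUSes in distinct connected components of ${\cal G}_{{\cal MUS}}(K)$ --- merely fills in the details the paper leaves implicit.
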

Unlike the uniqueness of MUS-decomposition, a KB can have multiple partial MUS-decompositions as shown in the following example.

\begin{example}
\label{ex:icc3}
Consider $K = \{a, \neg a,  a \vee b, \neg b, b, c, \neg c \wedge d, \neg d \wedge e \wedge f, \neg e, \neg f\}$. 
Figure \ref{fig:ubimuslb} depicts the graph representation of $K$
 which contains two connected components $\mathcal{C}_1$ and $\mathcal{C}_2$ where $\mathcal{C}_1 = \{a, \neg a,  a \vee b, \neg b, b\}$ and $\mathcal{C}_2 = \{c, \neg c \wedge d, \neg d \wedge e \wedge f, \neg e, \neg f\}$.
 So the MUS-decomposition of $K$ is $\{\mathcal{C}_1, \mathcal{C}_2\}$. However, there are many partial MUS-decompositions with some examples listed below:
\begin{itemize}
	\item $K_1=\{a, \neg a\}$, and $K_2=\{b, \neg b\}$.
	\item $K_1'=\{a, \neg a\}$, $K_2'=\{b, \neg b\}$, and $K_3'=\{c, \neg c\wedge d\}$.
	\item  $K_1''=\{\neg a, a\vee b, \neg b\}$, and $K_2''=\{\neg c\wedge d, \neg d\wedge e\wedge f\}$.
\end{itemize}
Note that $K_3'=\{c, \neg c\wedge d\}$ and $K_3''=\{\neg e, \neg d\wedge e\wedge f\}$ can not form a partial MUS-decomposition due to the violation of the condition (2) in Definition \ref{def:ccpartition}.
This also shows that condition (3) alone  can not guarantee to satisfy the condition 2 in the  definition.
\end{example}

\begin{figure}[!h]
\begin{center}
\includegraphics[width=8cm]{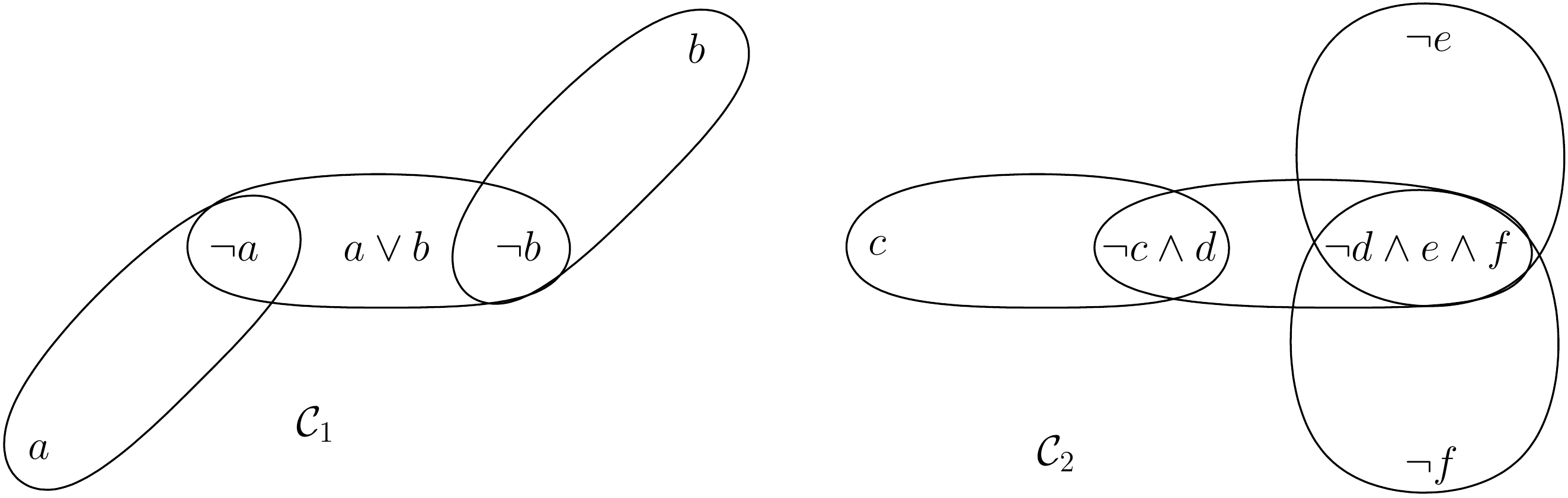} 
\caption{Connected components of $K$}
\label{fig:ubimuslb}
\end{center}
\end{figure}

\begin{definition}[Maximal  partial MUS-decomposition] \label{def:max-pMUSd}
 A partial MUS-decomposition  $T\in pMUSd(K)$  is called maximal if 
$|T| = \mu_{\cal D}(K),$ where $\mu_{\cal D}(K)$ is defined by
  $$\mu_{\cal D}(K)=\arg\max_{T'\in pMUSd(K)} |T'|.$$
Moreover, $\mu_{\cal D}(K)$ is called the distribution index of $K$.
\end{definition} 

That is, the maximal partial MUS-decomposition has the largest cardinality among all partial MUS-decompositions.
And the distribution index is the cardinality of maximal MUS-decompositions.

\begin{example}(Example \ref{ex:icc3} contd.)\label{ex:icc3-max}
Among maximal partial MUS-decompositions is $\{K'_1, K'_2, K'_3\}$.
Note that ${\cal C}_2$ contains highly connected formulae that cannot be separated into a partial MUS-decomposition of size larger than 2. 
\end{example}

Although a (maximal) partial MUS-decomposition can be formed by any subsets of $K$, the next proposition indicates that only $MUSes(K)$ are needed to obtain a (maximal) partial MUS-decomposition.

\begin{lemma}\label{lem:pMUSd}
Let $K$ be an inconsistent KB. There exist $\mu_{\cal D}(K)$ distinct MUSes $M_1,\ldots{}, M_{\mu_{\cal D}(K)}$ such that 
$\{M_i \mid 1\leq i \leq \mu_{\cal D}(K)\}$ is a maximal partial MUS-decomposition of $K$.
\end{lemma}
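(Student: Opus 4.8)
The plan is to start from any maximal partial MUS-decomposition and to replace each of its blocks by a single MUS contained in it, then to argue that the resulting family of MUSes is again a partial MUS-decomposition of the same (hence maximal) cardinality. Concretely, let $T=\{K_1,\ldots{},K_m\}\in pMUSd(K)$ with $m=\mu_{\cal D}(K)$. Since each $K_i\vdash\bot$ by condition (1) of Definition \ref{def:ccpartition}, I can pick for every $i$ a minimal unsatisfiable subset $M_i\in MUSes(K_i)$, so that $M_i\subseteq K_i$. I then claim that $\{M_1,\ldots{},M_m\}$ is a partial MUS-decomposition of $K$, which is enough since $\mu_{\cal D}(K)$ is defined as the maximum cardinality over $pMUSd(K)$.

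First I would dispatch the easy conditions. Each $M_i\vdash\bot$ by definition of a MUS, giving condition (1). For condition (3), since $M_i\subseteq K_i$ and the blocks $K_i$ are pairwise disjoint, the $M_i$ are pairwise disjoint as well; moreover each $M_i$ is nonempty (the empty set being consistent), so the $M_i$ are pairwise distinct and $|\{M_1,\ldots{},M_m\}|=m=\mu_{\cal D}(K)$.

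The crux is condition (2), namely $MUSes(M_1\cup\cdots{}\cup M_m)=\bigoplus_{1\leq i\leq m}MUSes(M_i)=\{M_1,\ldots{},M_m\}$, where I use $MUSes(M_i)=\{M_i\}$. The key observation is that being minimally unsatisfiable is an intrinsic property of a formula set, independent of the ambient base. Hence if $N\in MUSes(M_1\cup\cdots{}\cup M_m)$, then $N$ is also a MUS of the larger base $K_1\cup\cdots{}\cup K_m$, because $N\subseteq M_1\cup\cdots{}\cup M_m\subseteq K_1\cup\cdots{}\cup K_m$. Condition (2) for the original decomposition $T$ then forces $N\in MUSes(K_i)$ for a unique $i$, i.e. $N\subseteq K_i$. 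Combining $N\subseteq K_i$ with $N\subseteq M_1\cup\cdots{}\cup M_m$ and the disjointness $K_i\cap M_j=\emptyset$ for $j\neq i$, I obtain $N\subseteq K_i\cap(M_1\cup\cdots{}\cup M_m)=M_i$. Minimality of the MUS $M_i$ then yields $N=M_i$. Conversely each $M_i$ is trivially a MUS of the union, so condition (2) holds and $\{M_1,\ldots{},M_m\}\in pMUSd(K)$ is maximal.

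I expect the only delicate point to be the argument that no ``new'' MUS straddling several blocks can appear in $M_1\cup\cdots{}\cup M_m$. This is precisely where the absoluteness of minimal unsatisfiability is used to lift such an $N$ up to a MUS of $\bigcup_i K_i$, after which condition (2) of $T$ confines it to a single block and the intersection computation pins it down to $M_i$. Everything else is a routine check of Definition \ref{def:ccpartition}.
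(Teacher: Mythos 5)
Your proposal is correct and follows the same route as the paper: start from a maximal partial MUS-decomposition $\{K_1,\ldots,K_{\mu_{\cal D}(K)}\}$, select one $M_i\in MUSes(K_i)$ per block, and observe that the resulting family is again a partial MUS-decomposition of cardinality $\mu_{\cal D}(K)$. The paper dismisses the verification as ``easy to verify,'' whereas you supply the key missing detail---using the fact that minimal unsatisfiability is intrinsic to lift any $N\in MUSes(M_1\cup\cdots\cup M_m)$ to a MUS of $\bigcup_i K_i$, confine it to one block via condition (2), and pin it to $M_i$ by minimality---which is exactly the argument the paper implicitly relies on.
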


\begin{proof} 
Suppose $\{K_1,\cdots,K_{\mu_{\cal D}(K)}\}$ is a maximal MUS-decomposition of $K$.
Let $M_i \in MUSes(K_i)$ for $1\leq i\leq \mu_{\cal D}(K)$, then it is easy to verify that $\{{M}_1, \cdots, {M}_{\mu_{\cal D}}(K)\}$ is a partial MUS-decomposition whose cardinality is the distribution index of $K$, so it is a maximal pMUSd.
\end{proof}

That is, each element of a maximal partial MUS-decomposition can be some minimal unsatisfiable subsets of $K$, as $\{K'_1, K'_2, K'_3\}$ in Example \ref{ex:icc3}. 
Moreover, the following proposition tells that we can have another special format of maximal MUS-decomposition.

\begin{proposition}[Distributable MUS-decomposition]\label{lem:pMUSd1}
Let $K$ be an inconsistent KB.
There exist $\mu_{\cal D}(K)$ distinct  ${\cal M}_i\subseteq MUSes(K)$ for $1\leq i \leq\mu_{\cal D}(K)$, 
such that   $\{\bigcup_{M\in {\cal M}_i} M\mid 1\leq i \leq \mu_{\cal D}(K)\}$ is a maximal partial MUS-decomposition of $K$ and ${\cal M}_i$ is maximal w.r.t. set inclusion.
We call such a maximal partial MUS-decomposition a {distributable MUS-decomposition}.
\end{proposition}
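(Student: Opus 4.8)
The plan is to begin from the single-MUS maximal partial MUS-decomposition furnished by Lemma~\ref{lem:pMUSd} and to \emph{grow} its components greedily, absorbing one extra MUS at a time while the three conditions of Definition~\ref{def:ccpartition} are kept intact and the number of components is held fixed at $\mu_{\cal D}(K)$. Concretely, I would take the distinct MUSes $M_1,\ldots,M_{\mu_{\cal D}(K)}$ from Lemma~\ref{lem:pMUSd}, initialise ${\cal M}_i=\{M_i\}$, and iterate the step: whenever there are an index $i$ and a MUS $M\in MUSes(K)\setminus{\cal M}_i$ such that replacing ${\cal M}_i$ by ${\cal M}_i\cup\{M\}$ (all other ${\cal M}_j$ unchanged) still gives a partial MUS-decomposition, carry out that replacement. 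Each step strictly increases $\sum_i|{\cal M}_i|$, so the process halts after finitely many steps; and since the number of parts never changes, every family $\{\bigcup_{M\in{\cal M}_i}M\}_i$ produced along the way is a partial MUS-decomposition of cardinality $\mu_{\cal D}(K)$, hence a maximal one.

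The technical backbone I would isolate first is the elementary identity $MUSes(S)=\{M\in MUSes(K)\mid M\subseteq S\}$, valid for every $S\subseteq K$ since minimality of an inconsistent subset transfers both ways between $S$ and $K$. It turns condition~(2) of Definition~\ref{def:ccpartition} into the combinatorial requirement that \emph{no MUS of $K$ lying in the union of the components straddles two distinct components}, while conditions (1) and (3) are automatic once every component is a nonempty union of MUSes and the components are pairwise formula-disjoint. The key lemma is then a \emph{downward-closure} property: with the components $K_j$ ($j\neq i$) frozen, if a set $S\subseteq MUSes(K)$ makes $\bigcup_{M\in S}M$ a legitimate $i$-th component, then so does every nonempty $S'\subseteq S$. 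Conditions (1) and (3) plainly survive the shrinking; the delicate point --- and the main obstacle --- is condition~(2): any MUS $M^\star$ contained in the smaller total union lies, by validity of $S$, entirely within one part of the $S$-family, and if that part is some $K_j$ with $j\neq i$ it is untouched, whereas if it is the larger $i$-th component then, being formula-disjoint from every $K_j$ yet contained in the smaller total union, $M^\star$ must already sit inside $\bigcup_{M\in S'}M$.

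Finally I would combine these. At termination no single MUS can be adjoined to any ${\cal M}_i$; the downward-closure property upgrades this to genuine set-inclusion maximality, because a valid ${\cal M}_i'\supsetneq{\cal M}_i$ would, for any $M\in{\cal M}_i'\setminus{\cal M}_i$, make the smaller set ${\cal M}_i\cup\{M\}\subseteq{\cal M}_i'$ valid as well, contradicting that the process has halted. The ${\cal M}_i$ are pairwise disjoint, since a MUS contained in two formula-disjoint components would have to be empty, so they are $\mu_{\cal D}(K)$ distinct subsets of $MUSes(K)$; and the terminal family $\{\bigcup_{M\in{\cal M}_i}M\}_i$ is a maximal partial MUS-decomposition by the cardinality argument of the first paragraph. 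This delivers precisely the claimed distributable MUS-decomposition, the only genuinely subtle verification being the stability of condition~(2) under shrinking a component.
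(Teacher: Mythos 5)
Your proof is correct, and it shares the paper's skeleton---both start from the single-MUS maximal partial MUS-decomposition supplied by Lemma~\ref{lem:pMUSd} and enlarge each seed $\{M_i\}$ to an inclusion-maximal ${\cal M}_i$, relying on finiteness of $MUSes(K)$---but it certifies maximality differently. The paper's proof picks a maximal element outright: it restricts each ${\cal M}_i$ to the connected component ${\cal C}_i$ of ${\cal G}_{{\cal MUS}}(K)$ containing $M_i$, notes the family of valid choices is nonempty (take ${\cal M}_i=\{M_i\}$) and finite, and takes an inclusion-maximal one, so no auxiliary lemma is needed. Your greedy one-MUS-at-a-time variant instead opens a genuine gap between what termination yields (no \emph{single} MUS can be adjoined to any component) and what the proposition asserts (no valid ${\cal M}_i'\supsetneq{\cal M}_i$ exists at all), and you close it correctly with your downward-closure lemma: via the identity $MUSes(S)=\{M\in MUSes(K)\mid M\subseteq S\}$, a MUS $M^\star$ inside the shrunken union either lies in an untouched part $K_j$, or---being contained in the larger $i$-th component and hence formula-disjoint from every $K_j$ by condition~(3)---must sit inside the shrunken $i$-th component, so condition~(2) of Definition~\ref{def:ccpartition} survives shrinking. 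That lemma is the real added content of your write-up: it shows the family of valid enlargements of a fixed component is downward closed, so local (single-addition) maximality and genuine set-inclusion maximality coincide, a point the paper's one-line selection argument never has to make explicit. A further small difference in your favor is that you work with arbitrary ${\cal M}_i\subseteq MUSes(K)$ rather than inside the components ${\cal C}_i$, so the maximality you obtain is directly the global one stated in the proposition, whereas the paper's restriction to ${\cal C}_i$ leaves a (benign) bridging step implicit. Your remaining bookkeeping---pairwise formula-disjointness of the parts forces the ${\cal M}_i$ to be pairwise disjoint and hence distinct, and every intermediate family has exactly $\mu_{\cal D}(K)$ parts and is therefore maximal---is in order.
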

\begin{proof} {By Lemma \ref{lem:pMUSd}, take a maximal MUS-decomposition of the form $\{M_i \in MUSes(K)\mid 1\leq i \leq \mu_{\cal D}(K)\}$. Denote ${\cal C}_i$ the connected component of ${\cal G}_{{\cal MUS}}(K)$ such that $M_i\in {\cal C}_i$.
Now consider ${\cal M}_i\subseteq{\cal C}_i$  such that $\{\bigcup_{M\in {\cal M}_i} M\mid 1\leq i \leq \mu_{\cal D}(K)\}$ is still a partial MUS-decomposition of $K$.
Such ${\cal M}_i$ exists because we can take ${\cal M}_i=\{M_i\}$. $K$ is finite, so are ${\cal G}_{{\cal MUS}}(K)$ and ${\cal C}_i$.
Now taking ${\cal M}_i$ that is  maximal w.r.t. set-inclusion with such a property, the conclusion follows.}
\end{proof}


\begin{example}(Example \ref{ex:icc3} contd.)\label{ex:dMUSd}
$\{K'_1, K'_2, {\cal C}_2\}$ is a distributable MUS-decomposition,  but $\{K'_1, K'_2, K'_3\}$ is not because $K'_3\subset {\cal C}_2$. 
\end{example}

\begin{example} \label{ex:dMUSd-intro} Recall the example in Section \ref{sec:intro}:
$K=\{a_1, \neg a_1, a_1\vee \neg a_2, a_2, \neg a_2, \cdots, a_{n-1}, \neg a_{n-1}, a_{n-1}\vee \neg a_{n}, a_{n}, \neg a_{n}\}$.
The distributable MUS-decomposition of $K$ is  $\{a_i,\neg a_i\}$.
\end{example}

A distributable  MUS-decomposition defines  a way to separate a whole KB into maximal number of disjoint inconsistent components. The decomposed components, such as $\{a_i, \neg a_i\}$, can in turn be delivered to $n$ different experts to repair in parallel. In the case where resolving inconsistency is a serious and time-consuming decision, this can advance task time by a distributed manipulation of maximal experts. 
Indeed, the rational in  distribution MUS-decomposition  related to inconsistency resolving is given in the following proposition.

\begin{proposition}
Given an inconsistent base $K$ and $T=\{K_1, \cdots, K_n\}$ is a  distributable MUS-decomposition of $K$. Suppose $K'_i$ is a consistent base obtained by removing or weakening formulae in $K_i$. Then 
$K'=\bigcup_{1}^nK'_i$ is consistent.
\end{proposition}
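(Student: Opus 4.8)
The plan is to show that the consistency of $K'=\bigcup_{i=1}^n K'_i$ follows from the structural independence guaranteed by the distributable MUS-decomposition, namely that the components $K_i$ are pairwise disjoint (condition (3) of Definition \ref{def:ccpartition}) and that no MUS of $K$ straddles two components (condition (2)). First I would argue by contradiction: suppose $K'\vdash\bot$. Then $K'$ contains some minimal unsatisfiable subset $M$. Since each $K'_i$ is consistent, $M$ cannot be contained in a single $K'_i$, so $M$ must involve formulae drawn from at least two distinct components, i.e.\ there exist $i\neq j$ with $M\cap K'_i\neq\emptyset$ and $M\cap K'_j\neq\emptyset$.

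The crux of the argument is to derive a contradiction from the existence of such a cross-component MUS $M$. The key observation is that each $K'_i$ is obtained from $K_i$ only by \emph{removing or weakening} formulae; in the pure removal case $K'_i\subseteq K_i$, and more generally each formula of $K'_i$ is a logical consequence of the corresponding formula of $K_i$ (weakening makes a formula logically weaker). The plan is to lift $M$ back to the original base: replace each weakened formula occurring in $M$ by its original counterpart in $K_i$, obtaining a set $\widehat{M}\subseteq K$ that is still inconsistent (since strengthening the formulae of an inconsistent set preserves inconsistency) and that still spans components $i$ and $j$. Then $\widehat{M}$ contains a MUS of $K$ whose formulae come from at least two distinct $K_i$, contradicting condition (2) of the partial MUS-decomposition, which asserts $MUSes(K_1\cup\dots\cup K_n)=\bigoplus_{1\leq i\leq n} MUSes(K_i)$, i.e.\ every MUS of $K$ lies entirely within a single component.

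I expect the main obstacle to be handling the \emph{weakening} operation rigorously rather than the pure-removal case. For pure removal the argument is clean: any MUS of $K'\subseteq K$ is a MUS of $K$, which by condition (2) lies in one $K_i$, hence in one $K'_i$, contradicting consistency of $K'_i$. For weakening I must make precise what "weakening a formula" means and verify the monotonicity claim that replacing weakened formulae by their originals cannot destroy inconsistency. The natural formalization is that $K'_i$ is consistent and each of its formulae is entailed by the corresponding original formula, so that $K_i\vdash K'_i$ and hence $K'_i$ is a logical consequence of $K_i$; then $\widehat{M}\vdash M\vdash\bot$, giving $\widehat{M}\vdash\bot$. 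I would state this monotonicity as the single nontrivial step and note that it reduces the weakened case to the structural disjointness condition (2), after which the contradiction is immediate. Provided the weakening is understood in this consequence-preserving sense, the proof is complete.
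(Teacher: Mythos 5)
Your pure-removal argument is correct, and it is worth noting that the paper actually states this proposition \emph{without} proof, so that clean half is presumably the intended core: any MUS $M$ of $K'\subseteq K_1\cup\dots\cup K_n$ is, by the intrinsic minimality of $M$, also a MUS of $K_1\cup\dots\cup K_n$; by condition (2) of Definition \ref{def:ccpartition} it lies inside a single $K_j$, hence by condition (3) inside $K'_j$, contradicting the consistency of $K'_j$.

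The weakening half, however, contains a genuine gap, located exactly where you flagged the ``main obstacle.'' After lifting $M$ to $\widehat{M}\subseteq K_1\cup\dots\cup K_n$, you assert that $\widehat{M}$ contains a MUS of $K$ spanning two components, contradicting condition (2). That is a non sequitur: replacing formulae by logically stronger originals destroys the minimality of $M$, so the MUS inside $\widehat{M}$ may sit entirely inside one component $K_j$ --- and this contradicts nothing, because every $K_j$ is inconsistent anyway by condition (1) of the same definition. Producing an inconsistent subset of $\bigcup_i K_i$ is vacuous. To close the loop you would need to push the inconsistency of that MUS back down to its weakened counterpart inside a single $K'_j$, but entailment runs the wrong way: inconsistency of the stronger originals says nothing about the weakenings (indeed, making the weakened set consistent is the whole point of weakening).

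Worse, under your own formalization of weakening (each new formula is a consequence of its original), the statement is simply \emph{false}, so no proof along these lines can succeed. Take $K_1=\{a,\neg a\}$ and $K_2=\{b,\neg b\}$; then $\{K_1,K_2\}$ is a distributable MUS-decomposition of $K=K_1\cup K_2$. Weaken $a$ to $a\vee b$, giving $K'_1=\{a\vee b,\neg a\}$ (consistent), and weaken $b$ to $b\vee\neg a$, giving $K'_2=\{b\vee\neg a,\neg b\}$ (consistent). Yet $K'_1\cup K'_2\supseteq\{a\vee b,\neg a,\neg b\}$ is inconsistent. The disjointness in conditions (2)--(3) is purely syntactic, and weakening can create fresh semantic interactions across components through shared or newly introduced variables; one can even build such counterexamples where each weakening uses only variables already occurring in its own $K_i$. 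The proposition survives for weakening only under an additional hypothesis the paper never states --- e.g., that distinct components share no propositional variables and each weakening stays within the variables of its own component, in which case satisfying assignments of the $K'_i$ can be amalgamated. You should either restrict the claim to removal (where your proof is complete) or add such a hypothesis, rather than expect the lifting argument to deliver the weakening case.
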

That is, inconsistencies in each component can be resolved  separately and the merged KB afterwards is consistent.
However, note that $K'\cup R$ where $R=K\setminus\bigcup_{i}K_i$ is not necessarily consistent\footnote{Indeed, this is unavoidable by Proposition \ref{prop:icc1hs} if each expert only removes one formula from $K$.}. For  instance, in Example \ref{ex:icc3-max}, if we have $K'_1=\{\neg a\}$  and $K'_2=\{\neg b\}$ after expert verification, we still have inconsistency in $\{\neg a, a\vee b,\neg b\}$. In this case, we can drop $a\vee b$ because $\{\neg a,\neg b\}$ have been manually chosen by experts; Or for carefulness, we can retrigger the same process to resolve the rest inconsistencies. 
\subsection{Distribution-based Inconsistency Degree}

As we can see above that a distributable MUS-decomposition gives a reasonable disjoint partition of a KB. In this section, we study the distribution index which rises an interesting inconsistency measure with desired properties.

 \begin{definition}
 \label{def:MeasureMIP}
Let $K$ be a KB, the distribution-based inconsistency degree $ I_{{\cal D}}(K)$ is defined as: 
$$ I_{{\cal D}}(K) = \mu_{\cal D}(K).$$
 \end{definition}
 
Intuitively, $ I_{{\cal D}}(K)$ characterizes how many experts are demanded to repair inconsistencies in parallel.
The higher the value is, more labor force is required\footnote{Note that labor force for resolving an inconsistent component can vary for different inconsistent components because of for instance differences in their sizes. But we focus on the study of maximal number of components in the present paper.}.

\begin{example}(Example \ref{ex:dMUSd} contd.) Since $\{K'_1,K'_2,{\cal C}_2\}$ is a distributable MUS-decomposition, we have $I_{{\cal D}}(K)   = 3$.
\end{example}

Indeed, the so defined measure satisfies several important properties for an inconsistency measure. 
\begin{proposition} 
	$I_{{\cal D}}(K)$ satisfies Consistency, Monotony, Free formula independence,  MinInc, and Independent Decomposability.
\end{proposition}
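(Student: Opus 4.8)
The plan is to reduce everything to one structural observation: by Lemma~\ref{lem:pMUSd} the distribution index can always be realized by a family of \emph{MUSes}, and all three conditions of Definition~\ref{def:ccpartition} are intrinsic to the chosen subsets (they never reference the ambient base beyond the requirement that each $K_i$ be a subset of it). I would first record the clean fact that $\mu_{\cal D}(K)$ depends only on the collection $MUSes(K)$: for $N_1,\ldots,N_m\in MUSes(K)$ one has $MUSes(N_1\cup\cdots\cup N_m)=\{M\in MUSes(K)\mid M\subseteq N_1\cup\cdots\cup N_m\}$, since a minimal inconsistent subset of a subset of $K$ is a MUS of $K$ and conversely. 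Hence whether a family of MUSes satisfies conditions (1)--(3) is decided entirely inside $MUSes(K)$.

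From this, four of the five properties follow quickly. For \emph{Consistency}, if $K$ is consistent then $MUSes(K)=\emptyset$ and condition~(1) forbids any nonempty family, so $\mu_{\cal D}(K)=0$; if $K\vdash\bot$ then any single MUS $M$ gives the partial MUS-decomposition $\{M\}$, so $\mu_{\cal D}(K)\ge 1$. For \emph{MinInc}, when $K=M\in MUSes(K)$ the only inconsistent subset of $M$ is $M$ itself, so the largest partial MUS-decomposition is $\{M\}$ and $I_{\cal D}(M)=1$. For \emph{Monotony}, every partial MUS-decomposition of $K$ is also one of $K\cup K'$ (its members stay subsets of $K\cup K'$ and conditions (1)--(3) are unchanged), whence $pMUSd(K)\subseteq pMUSd(K\cup K')$ and $I_{\cal D}(K)\le I_{\cal D}(K\cup K')$. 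For \emph{Free Formula Independence}, a free $\alpha$ satisfies $MUSes(K\cup\{\alpha\})=MUSes(K)$, so by the reduction above $\mu_{\cal D}(K\cup\{\alpha\})=\mu_{\cal D}(K)$.

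The real work is \emph{Independent Decomposability}. Write $K=K_1\cup\cdots\cup K_n$ with $MUSes(K)=\bigoplus_i MUSes(K_i)$ and $unfree(K_i)\cap unfree(K_j)=\emptyset$ for $i\ne j$; I must show $\mu_{\cal D}(K)=\sum_i\mu_{\cal D}(K_i)$. For ``$\ge$'', I would take maximal partial MUS-decompositions $T_i$ of each $K_i$ consisting of MUSes (Lemma~\ref{lem:pMUSd}) and verify that $T=\bigcup_i T_i$ is a partial MUS-decomposition of $K$: disjointness across blocks holds because each member of $T_i$ lies in $unfree(K_i)$ and these supports are disjoint, while condition~(2) follows from the reduction lemma applied blockwise, using that any MUS contained in $\bigcup_{M\in T}M$ sits inside a single block. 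For ``$\le$'', I would take a maximal $T=\{N_1,\ldots,N_m\}\subseteq MUSes(K)$ realizing $\mu_{\cal D}(K)$, route each $N_l$ to the unique block index $\sigma(l)$ with $N_l\in MUSes(K_{\sigma(l)})$ (well-defined by the $\oplus$ hypothesis), and set $T_i=\{N_l\mid\sigma(l)=i\}$; checking that each $T_i$ is a partial MUS-decomposition of $K_i$ gives $m=\sum_i|T_i|\le\sum_i\mu_{\cal D}(K_i)$.

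The crux, and where I expect the argument to be most delicate, is verifying condition~(2) for the restricted families $T_i$, equivalently for $V_i=\bigcup_{M\in T_i}M$: no new MUS may appear when passing between a block and the whole base. This is exactly where both hypotheses are used. The decomposition $MUSes(K)=\bigoplus_i MUSes(K_i)$ guarantees that each MUS belongs to a single block, and the disjointness of the $unfree(K_i)$ guarantees that a (nonempty) MUS supported in block $j$ cannot be contained in $V_i\subseteq unfree(K_i)$ when $i\ne j$. Combining these with condition~(2) for $T$ itself pins down $MUSes(V_i)=T_i$, and symmetrically $MUSes(\bigcup_i V_i)=\bigcup_i T_i$ for the other direction. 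Once both inequalities are in place, $I_{\cal D}(K)=\sum_i I_{\cal D}(K_i)$ and the proposition is complete.
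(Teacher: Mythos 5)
Your proof is correct and follows essentially the same route as the paper: the four easy properties are handled identically, and for Independent Decomposability both arguments rest on Lemma~\ref{lem:pMUSd} to realize maximal partial MUS-decompositions by families of MUSes, then combine the per-block families for one inequality and split a maximal family blockwise for the other. Your write-up is in fact more careful than the paper's, which states the combined family is ``easy to see'' to lie in $pMUSd(K\cup K')$ and implicitly assumes the restricted families are partial MUS-decompositions --- precisely the condition-(2) verifications you flag as the crux, where the disjointness of the $unfree(K_i)$ is genuinely needed --- and you also treat general $n$ rather than two bases.
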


\begin{proof}
	\textit{Consistency}: If $K$ is consistent, the partial MUS-decomposition set is empty, so $I_{{\cal D}}(K)=0$. \\
	 \textit{Monotony}: For any KB $K$ and $K'$, it is easy to see that a partial MUS-decomposition of $K$ is a partial MUS-decomposition of $K\cup K'$.  Therefore, $\mu_{\cal D}(K)\leq \mu_{\cal D}(K\cup K')$.\\
	\textit{Free formula independence}: It follows from the obvious fact that free formula do not effect the set of partial MUS-decompositions.\\
	\textit{MinInc}: For $M\in MUSes(K)$, clearly, the only partial decomposition of $M$ is $\{M\}$, so $I_{{\cal D}}(K)=1$.\\
	\textit{Independent Decomposability}: Let $K,K'$ two bases satisfying  $MUSes(K) \oplus MUSes(K') = MUSes(K\cup K')$ and  $unfree(K) ~\cap~ unfree(K') = \emptyset$. For any  partial  MUS-decompositions of $K$ and $K'$: $\mathcal{M}=\{{ M}_1,\cdots,{ M}_{\mu_{\cal D}(K)}\}$ and  $\mathcal{M'}=\{{ M}'_1,\cdots,{ M}'_{\mu_{\cal D}(K')}\}$, it is easy to see  $\mathcal{M}\cup \mathcal{M'}\in pMUSd(K\cup K')$.  Moreover,  ${\cal M}\cup \mathcal{M'}$  is of the  maximal cardinality in $pMUSd(K\cup K')$. Otherwise, by Lemma \ref{lem:pMUSd}, there are ${ M}''_j\in MUSes(K\cup K')$ that form a partial MUS-decomposition of $K\cup K'$: $\{{ M}''_1,\cdots,{ M}''_N\}$ with $N>\mu_{\cal D}(K)+\mu_{\cal D}(K')$. Since $MUSes(K) \oplus MUSes(K') = MUSes(K\cup K')$, we have either
 $M''_j\in MUSes(K)$ or  $M''_j\in MUSes(K')$ for all $j$. So at least  one of $K$ and $K'$  has a partial MUS-decomposition whose cardinality is stricter larger than its distribution index. A contradiction with the definition of distribution index.
So $\mu_{\cal D}(K \cup K') = \mu_{\cal D}(K) + \mu_{\cal D}(K' )$. Consequently, $I_{{\cal D}}$ satisfies independent decomposability property.
\end{proof}

Moreover, the distribution-based inconsistency measure  is a lower bound of inconsistency measures which satisfy monotony, independent Decomposability, and MinInc properties.

\begin{proposition}
Given an inconsistency measure $I$ that satisfies Monotony, Independent Decomposability, and MinInc,  we have
 $I(K) \ge \mu_{\cal D}(K)$.
\end{proposition}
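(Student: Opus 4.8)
The plan is to show that any measure $I$ satisfying Monotony, Independent Decomposability, and MinInc must assign $K$ a value at least as large as its distribution index $\mu_{\cal D}(K)$. The strategy is to exploit a distributable (or simply maximal partial) MUS-decomposition of $K$ as an explicit witness. By Lemma~\ref{lem:pMUSd}, there exist $\mu_{\cal D}(K)$ distinct MUSes $M_1,\ldots,M_{\mu_{\cal D}(K)}$ forming a maximal partial MUS-decomposition of $K$. The key idea is to bound $I$ on the union $U=M_1\cup\cdots\cup M_{\mu_{\cal D}(K)}$ from below using decomposability, and then pass from $U$ to $K$ using monotony.

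First I would invoke Monotony: since $U\subseteq K$, we have $I(U)\le I(K)$, so it suffices to prove $I(U)\ge\mu_{\cal D}(K)$. Second, I would apply Independent Decomposability to the family $\{M_1,\ldots,M_{\mu_{\cal D}(K)}\}$. The defining conditions of a partial MUS-decomposition (Definition~\ref{def:ccpartition}) give exactly what the application condition of Independent Decomposability (Definition~\ref{def:enh-additivity}) demands: condition~(2) states $\mathit{MUSes}(U)=\bigoplus_i \mathit{MUSes}(M_i)$, and since each $M_i$ is a single MUS we have $\mathit{unfree}(M_i)=M_i$, so condition~(3), $M_i\cap M_j=\emptyset$ for $i\neq j$, yields $\mathit{unfree}(M_i)\cap \mathit{unfree}(M_j)=\emptyset$. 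Hence Independent Decomposability applies and $I(U)=\sum_{i=1}^{\mu_{\cal D}(K)} I(M_i)$. Third, I would use MinInc: each $M_i$ is a MUS, so $I(M_i)=1$, giving $I(U)=\mu_{\cal D}(K)$. Combining with the monotony step yields $I(K)\ge I(U)=\mu_{\cal D}(K)$.

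The one point deserving care is the verification that the witness genuinely meets the precondition of Independent Decomposability, namely the translation from the partial-MUS-decomposition conditions to the unfree-disjointness condition. This is where the observation $\mathit{unfree}(M_i)=M_i$ (valid precisely because $M_i$ is a minimal inconsistent set, so every one of its formulae lies in a MUS, namely $M_i$ itself) does the work; without it, disjointness of the $M_i$ would not immediately give disjointness of the unfree parts. I expect this to be the main (though modest) obstacle, since the rest is a clean chaining of the three hypotheses. A subtlety worth noting is that the argument only needs the existence guaranteed by Lemma~\ref{lem:pMUSd}, not the full distributable decomposition of Proposition~\ref{lem:pMUSd1}, so I would phrase the proof around the simpler maximal partial MUS-decomposition made of individual MUSes.
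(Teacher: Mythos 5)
Your proof is correct and follows essentially the same route as the paper's: both chain Monotony, Independent Decomposability, and MinInc over a maximal partial MUS-decomposition. The only (harmless) variation is that you invoke Lemma~\ref{lem:pMUSd} to take the components to be single MUSes, so MinInc gives $I(M_i)=1$ directly, whereas the paper keeps general components $K_i$ and uses MinInc together with Monotony to get $I(K_i)\ge 1$; your explicit check that $M_i\cap M_j=\emptyset$ yields $\mathit{unfree}(M_i)\cap\mathit{unfree}(M_j)=\emptyset$ fills a step the paper leaves implicit.
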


\begin{proof} For any partial MUS-decomposition $\{K_1,\ldots,K_n\}$ of $K$, we have $\bigcup\limits_{1\leq i\leq n}K_i \subseteq K$.
So by monotony, $I(K) \ge I(K_1 \cup \ldots \cup K_n)$.
Moreover, since $I$ satisfies independent Decomposability, $I(K) \ge I(K_1) + \ldots + I(K_n)$.
Taking a maximal partial MUS-decomposition, one  can deduce that $I(K) \ge I(K_1) + \ldots + I(K_{\mu_{max(K)}})$.
By MinInc and monotony, $I(K_i) \ge 1$, so $I(K) \ge \mu_{\cal D}(K)$.
\end{proof}

\begin{example}(Example \ref{exam:enh-additivity} contd.) \label{ex:ids}
	For different measures based on MUSes, we have
	\begin{itemize}
	\item[--] $I_{{\cal D}}(K_1\cup K_2)=1$ and $I_{{\cal D}}(K_1\cup K_3)=2$;
	\item[--] $\delta_{hs}(K_1\cup K_2)=1$ and $\delta_{hs}(K_1\cup K_3)=2$;
	\item[--]  $I'_{M}(K_1\cup K_2)=1$ and $I'_{M}(K_1\cup K_3)=4$;
	\item[--] $I_{MI}(K_1\cup K_2)=2$ and $I_{MI}(K_1\cup K_3)=2$.
	\end{itemize}
	So all $I_{{\cal D}},\delta_{hs},$ and $I'_{M}$ give a conclusion that  $K_1\cup K_2$ is less inconsistent than $K_1\cup K_3$, which coincides with our intuition, but it is not the case of $I_{MI}$.
\end{example}

In Example \ref{ex:ids}, we have $I_{{\cal D}}$ and $\delta_{hs}$ of  the same value.
But it is not the general case as shown in the following example. 

\begin{example}(Example  \ref{ex:icc3} contd.)
	For the  connected component ${\cal C}_2$,
	 $\delta_{hs}({\cal C}_2) = 2$  while its distribution index  is 1.
\end{example}

However, the following Proposition gives a general relationship between $I_{{\cal D}}$ and $\delta_{hs}$.

\begin{proposition}\label{prop:icc1hs}
Let $K$ be a KB.
We have $$ I_{{\cal D}}(K)  \le \delta_{hs}(K).$$
\end{proposition}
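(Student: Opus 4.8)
The plan is to show that every minimum hitting set of $\mathit{MUSes(K)}$ gives rise to a partial MUS-decomposition whose cardinality is at least $\mu_{\cal D}(K)$, thereby establishing $\mu_{\cal D}(K)\le\delta_{hs}(K)$. Let $H$ be a minimum hitting set of $\mathit{MUSes(K)}$, so $|H|=\delta_{hs}(K)$ and $H$ meets every MUS. The key idea is to associate with $H$ a collection of pairwise disjoint inconsistent subsets, one ``anchored'' at each formula of $H$, and then argue that this collection (or a refinement of it) is a partial MUS-decomposition. Concretely, for each $\alpha\in H$ I would pick a single MUS $M_\alpha\in\mathit{MUSes(K)}$ containing $\alpha$; since $H$ is a hitting set, every MUS contains some element of $H$, which will be the lever for controlling how these chosen MUSes overlap.

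The main obstacle is that the chosen MUSes $\{M_\alpha\mid\alpha\in H\}$ need not be pairwise disjoint, so they do not immediately satisfy condition (3) of Definition \ref{def:ccpartition}, nor condition (2). I would address this by a counting/injectivity argument rather than by trying to disjointify directly. The cleanest route is to bound $\mu_{\cal D}(K)$ from above using any maximal partial MUS-decomposition $T=\{K_1,\ldots,K_{\mu_{\cal D}(K)}\}$, which by Lemma \ref{lem:pMUSd} may be taken of the form $K_i=M_i$ with each $M_i\in\mathit{MUSes(K)}$ and, by condition (3), the $M_i$ pairwise disjoint. Now since $H$ is a hitting set of $\mathit{MUSes(K)}$, it must intersect each $M_i$; pick $\alpha_i\in H\cap M_i$. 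Because the $M_i$ are pairwise disjoint, the map $i\mapsto\alpha_i$ is injective into $H$, so $\mu_{\cal D}(K)=|T|\le|H|=\delta_{hs}(K)$.

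This injectivity step is the heart of the argument and is where the disjointness guaranteed by the maximal partial MUS-decomposition (condition (3)) meets the covering property of the hitting set: disjoint targets receiving distinct witnesses from $H$ forces $|H|$ to be at least the number of targets. I would therefore carry out the proof in this order: first invoke Lemma \ref{lem:pMUSd} to fix a maximal partial MUS-decomposition consisting of pairwise disjoint MUSes; second take a minimum hitting set $H$; third use the hitting property to select a witness $\alpha_i\in H\cap M_i$ for each $i$; fourth observe that disjointness of the $M_i$ makes $i\mapsto\alpha_i$ injective; and finally read off $\mu_{\cal D}(K)\le\delta_{hs}(K)$, which is exactly $I_{{\cal D}}(K)\le\delta_{hs}(K)$ by Definition \ref{def:MeasureMIP}. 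I expect no subtle gaps beyond ensuring that the witnesses are genuinely distinct, which is immediate from pairwise disjointness.
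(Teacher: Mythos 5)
Your final argument is correct and is essentially the paper's own proof: the paper likewise takes the partition of $K$ into $\mu_{\cal D}(K)$ pairwise disjoint inconsistent components and observes that any hitting set of $\mathit{MUSes(K)}$ must contain at least one formula from each component, these formulae being distinct by disjointness, whence $\mu_{\cal D}(K)\le\delta_{hs}(K)$. Your explicit appeal to Lemma \ref{lem:pMUSd} and the injectivity of $i\mapsto\alpha_i$ merely spells out what the paper leaves implicit, and you were right to abandon your first-paragraph plan of building a decomposition from $H$ itself, since the MUSes $M_\alpha$ chosen through $H$ need not be disjoint.
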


\begin{proof}
	As $K$ can be partitioned into $\mu_{\cal D}(K)$ disjoint  components of minimal inconsistent subsets of $K$, a minimal hitting set of $K$ must contain at least  one formula from each component.
That is, $I_{{\cal D}}(K)=\mu_{\cal D}(K)  \le \delta_{hs}(K)$.
\end{proof}

\begin{example}(Example \ref{ex:dMUSd-intro} contd.) We have $ I_{{\cal D}}(K) = n$ and $\delta_{hs}(K)=n$.
But the former means that $K$ can be distributed to $n$ experts to resolve inconsistency in parallel and each expert only verifies two elements because of the distribution MUS-decomposition is $\{a_i,\neg a_i\}$; Whilst the latter means that each expert needs to  verify at least $n$ formulae to confirm an inconsistency resolving plan.  And different experts  have to do repetition work due to  overlapping  among different hitting sets. 
\end{example}
This example shows that the proposed MUS-decomposition gives a more competitive inconsistency handling methodology than the hitting set based approach albeit the occasionally equivalent value of the deduced inconsistency measures  $ I_{{\cal D}}(K)$ and $\delta_{hs}$.

\section{Computations of  $I_{{\cal D}}(K)$}\label{sec:computation}

In this section, we consider the computational issues of distribution-based inconsistency measure $I_{{\cal D}}(K)$
by generalizing the classical Set Packing problem, and then show two encodings of   $I_{{\cal D}}(K)$, which is aiming at practical algorithms for its solution. 

We first look at the following proposition which is a simple conclusion of  Lemma \ref{lem:pMUSd}.

\begin{proposition} \label{prop:mumus}
Let $K$ be a KB. $I_{\cal D}$ is the maximal  cardinality of $\mathcal{M}\subseteq  MUSes(K)$ satisfying 
	\begin{enumerate} 
	  	\item $MUSes(\cup_{M \in \mathcal{M}} M) = \mathcal{M}$.  
	\end{enumerate}
\end{proposition}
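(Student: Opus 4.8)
The plan is to prove the claimed equality $I_{\cal D}(K)=\mu_{\cal D}(K)=N$, where $N:=\max\{\,|\mathcal{M}|:\mathcal{M}\subseteq MUSes(K),\ MUSes(\bigcup_{M\in\mathcal{M}}M)=\mathcal{M}\,\}$, by two opposite inequalities. The single observation driving everything is that a MUS $M$ satisfies $MUSes(M)=\{M\}$; hence, whenever the components $K_i$ of a partial MUS-decomposition are themselves single MUSes, condition~(2) of Definition~\ref{def:ccpartition} reads $MUSes(\bigcup_i M_i)=\bigoplus_i\{M_i\}=\{M_1,\dots,M_n\}$, which is exactly the condition defining $N$.

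For $\mu_{\cal D}(K)\le N$, I would feed Lemma~\ref{lem:pMUSd} into this observation: the lemma provides $\mu_{\cal D}(K)$ distinct MUSes $M_1,\dots,M_{\mu_{\cal D}(K)}$ forming a maximal partial MUS-decomposition, and taking $\mathcal{M}=\{M_1,\dots,M_{\mu_{\cal D}(K)}\}$ the rewriting above shows $MUSes(\bigcup_{M\in\mathcal{M}}M)=\mathcal{M}$. Thus $\mathcal{M}$ is admissible for the maximization and $|\mathcal{M}|=\mu_{\cal D}(K)$, giving $N\ge\mu_{\cal D}(K)$.

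For the converse $N\le\mu_{\cal D}(K)$, I would pick a maximizer $\mathcal{M}=\{M_1,\dots,M_N\}$ and try to exhibit it as a partial MUS-decomposition of cardinality $N$. Condition~(1) of Definition~\ref{def:ccpartition} is immediate since each $M_i\vdash\bot$, and condition~(2) is precisely the hypothesis $MUSes(\bigcup_i M_i)=\mathcal{M}$ read through $MUSes(M_i)=\{M_i\}$. If the $M_i$ are pairwise disjoint this makes $\mathcal{M}$ a genuine partial MUS-decomposition, whence $\mu_{\cal D}(K)\ge N$ and the proof closes.

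The main obstacle is exactly this pairwise-disjointness, condition~(3) of Definition~\ref{def:ccpartition}, which the bare hypothesis $MUSes(\bigcup_i M_i)=\mathcal{M}$ does not by itself deliver: two distinct MUSes in $\mathcal{M}$ may share a formula and yet their union create no new MUS. I would therefore try to show that a maximizing family may always be chosen pairwise disjoint, via an exchange argument on ${\cal G}_{{\cal MUS}}(K)$ that replaces overlapping members inside a connected component by the disjoint representatives guaranteed by Lemma~\ref{lem:pMUSd}, checking that no such replacement lowers $|\mathcal{M}|$ or violates $MUSes(\bigcup_i M_i)=\mathcal{M}$. Reconciling this MUS-level equality with formula-level disjointness is the step I expect to demand the most care; should it resist, the cleanest reading is to take the characterization of $N$ with the members of $\mathcal{M}$ required pairwise disjoint, matching the partial MUS-decomposition they are drawn from.
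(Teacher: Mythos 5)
Your two-inequality plan is, in substance, exactly the paper's argument: the paper offers no proof at all beyond the remark that the proposition ``is a simple conclusion of Lemma~\ref{lem:pMUSd}'', and the intended content is precisely your first direction (a maximal partial MUS-decomposition made of single MUSes, supplied by Lemma~\ref{lem:pMUSd}, satisfies $MUSes(\bigcup_{M\in\mathcal{M}}M)=\mathcal{M}$, so $N\ge\mu_{\cal D}(K)$) together with the observation that a pairwise disjoint admissible family $\{M_1,\dots,M_N\}$ is itself a partial MUS-decomposition, giving $N\le\mu_{\cal D}(K)$.

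The obstacle you flag in the converse is not a defect of your proof but of the proposition as literally stated, and your planned exchange argument cannot be made to work: without pairwise disjointness the statement is simply false. Take $K=\{a,\ \neg a,\ \neg a\vee b,\ \neg b\}$, with $M_1=\{a,\neg a\}$ and $M_2=\{a,\ \neg a\vee b,\ \neg b\}$. One checks $MUSes(K)=\{M_1,M_2\}$, so $\mathcal{M}=\{M_1,M_2\}$ satisfies $MUSes(\bigcup_{M\in\mathcal{M}}M)=\mathcal{M}$ and $N=2$; yet both MUSes contain $a$, so every inconsistent subset of $K$ contains $a$, no two disjoint inconsistent subsets exist, and hence $\mu_{\cal D}(K)=1$. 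Thus overlapping maximizers cannot in general be exchanged for disjoint ones of the same cardinality. Your fallback reading is the correct repair and is in fact what the paper intends: the sentence immediately following the proposition glosses it as ``the largest number of \emph{(pairwise disjoint)} MUSes of $K$ such that their union will not rise any new MUS''. With pairwise disjointness added to the characterization of $N$, your second direction closes at once --- conditions (1) and (3) of Definition~\ref{def:ccpartition} are immediate, and condition (2) reads $MUSes(\bigcup_i M_i)=\mathcal{M}=\bigoplus_i MUSes(M_i)$ since $MUSes(M_i)=\{M_i\}$ and the $M_i$ are distinct --- so your completed argument coincides with the paper's intended one, and you have additionally identified a genuine imprecision in the published statement.
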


Proposition \ref{prop:mumus}  states that  
$I_{\cal D}(K)$ is the largest number of (pairwise disjoint) MUSes of $K$ such that their union  will not rise any new MUS, which gives a way to compute $I_{{\cal D}}(K)$.

Next we study this computation in the framework of Maximum Closed Set Packing (MCSP) defined in the following.

\subsection{Closed Set Packing}

The maximum set packing problem is one of the basic optimization problems (see, e.g.,~\cite{setPacking79}).
It is related to other well-known optimization problems, such as the maximum independent set and maximum clique problems~\cite{AroraLMSS98,AroraS98,BoppanaH92,FeigeGLSS96,Wigderson83}. 
We here introduce a variant of this problem, called the \textit{maximum closed set packing problem}. We show that this variant is NP-hard by providing a reduction   from the maximum set packing problem  which is NP-hard~\cite{Karp72}. 
 In this work, the maximum closed set packing problem is used to compute the distribution-based inconsistency measure. 
 
Let $U$ be universe and  $S$  be a family of subsets of $U$. 
\begin{definition}[Set Packing]
A {\em set packing} is a subset $P\subseteq S$ such that, 
for all $S_i, S_j\in P$ with $S_i\neq S_j$, $S_i\cap S_j =\emptyset$. 
\end{definition}

Our variant is obtained from the maximum set packing problem by further requiring that the union of selected subsets does not contain unselected subsets in $S$ as defined below.

\begin{definition}[Closed Set Packing]
A {\em closed set packing} is a set packing $P\subseteq S$ such that, 
for all $S_i\in S\setminus P$,  $S_i$ is not a subset of $\bigcup_{P_i\in P} P_i$. 
\end{definition}

The {\em maximum  (free) set packing problem}  consists in founding a (free)  set packing with maximum cardinality, written MSP (MCSP).  

\begin{theorem} \label{thm:MFSP-complexity}
	MCSP is NP-hard.
\end{theorem}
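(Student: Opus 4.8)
The plan is to prove NP-hardness by a polynomial-time reduction from the Maximum Set Packing problem (MSP), which is stated to be NP-hard by \cite{Karp72}. Given an instance of MSP consisting of a universe $U$ and a family $S$ of subsets of $U$, I would construct an instance of MCSP whose optimal closed set packing has the same cardinality as the optimal set packing of the original instance. The core difficulty is that a closed set packing carries the extra constraint that the union of the selected subsets must not \emph{contain} any unselected member of the family as a subset; a plain set packing need not respect this. So the reduction must neutralize this closure condition without altering the packing structure.

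The key idea I would pursue is to enlarge each original set with a fresh private element, so that no set can ever be a subset of a union of \emph{other} sets. Concretely, for each $S_i\in S$ introduce a brand-new element $x_i\notin U$, and define $S_i' = S_i\cup\{x_i\}$, over the extended universe $U' = U\cup\{x_1,\ldots,x_{|S|}\}$. Let $S' = \{S_i'\mid S_i\in S\}$. First I would check the correspondence between packings: since the $x_i$ are pairwise distinct and private to their own set, $S_i'\cap S_j' = S_i\cap S_j$ for $i\neq j$, so $\{S_i'\mid i\in A\}$ is a set packing of $S'$ if and only if $\{S_i\mid i\in A\}$ is a set packing of $S$. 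This gives a cardinality-preserving bijection on the level of set packings.

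Next I would verify that in the enlarged instance \emph{every} set packing is automatically closed, which is where the private elements do their work. Take any set packing $P = \{S_i'\mid i\in A\}$ of $S'$ and any unselected member $S_j'$ with $j\notin A$. Its private element $x_j$ appears only in $S_j'$, hence $x_j\notin\bigcup_{i\in A} S_i'$, so $S_j'$ is \emph{not} a subset of that union. Therefore $P$ satisfies the closure condition vacuously, and the closed set packings of $S'$ are exactly its set packings. Combining this with the bijection above shows that the maximum closed set packing of $S'$ has the same cardinality as the maximum set packing of $S$, and the construction is clearly computable in polynomial time. This yields the reduction and establishes NP-hardness.

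The main obstacle to watch for is subtle but not deep: I must confirm that adding private elements does not accidentally merge distinct sets or create empty sets that trivialize the instance, and that the closure direction is genuinely an ``if and only if'' rather than only one inclusion. Both points are handled by the disjointness of the fresh elements from $U$ and from each other, so the argument is robust; the only real care needed is in stating the bijection between solutions precisely so that optimal values transfer in both directions.
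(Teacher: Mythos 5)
Your proof is correct and uses essentially the same reduction as the paper: augmenting each set $S_i$ with a fresh private element so that every set packing of the enlarged family is automatically closed. You actually spell out the verification (intersection preservation and the vacuous closure condition) more carefully than the paper does, but the construction and argument are identical.
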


\begin{proof}
We construct a reduction from the maximum set packing problem to the maximum closed set packing problem.
Let $U$ be a universe,  $S=\{S_1,\ldots{}, S_n\}$ a family of subsets of $U$ and $e_1,\ldots{},e_n$ are $n$ distinct elements which do not belong to $U$.
Define $U'=U\cup \{e_1,\ldots{}, e_n\}$ and $S'=\{S_1\cup\{e_1\},\ldots{}, S_n\cup\{e_n\}\}$.
We have $P$ is a solution of the  maximum set  packing problem for $(U,S)$ if and only if   
$P'=\{S_i\cup\{e_i\} \mid S_i \in P\}$ is a solution of the  maximum closed set packing problem for $(U', S')$.
Since maximum set packing MSP is NP-hard, so is the MCSP.
\end{proof}


%


\subsection{Integer Linear Program Formulation of MCSP}

We here provide an encoding of the maximum closed set packing problem in linear integer programming. 
Let $U$ be a universe and $S$ a set of subsets of $U$. 
We associate a binary variable $X_{S_i}$ ($X_{S_i}\in\{0,1\}$) to each subset $S_i$  in $S$. 
We also associate a binary variable $Y_e$ to each element $e$ in $U$.\\

The first linear inequalities allow us to only consider the pairwise disjoint subsets in $S$:
\begin{equation}
\label{eq1}
\sum_{e\in S_i, S_i\in S} X_{S_i}\leq 1~~~~for~all~e\in U
\end{equation}

The  following inequalities allow us to have $X_{S_i}=1$ if and only if, for all $e\in S_i$, $Y_e=1$:
\begin{equation}
\label{eq2}
(\sum_{e\in S_i} Y_e)-C_i*X_{S_i}\geq 0~~~~for~all~S_i\in S
\end{equation}
\begin{equation}
\label{eq3}
(\sum_{e\in S_i} Y_e)-X_{S_i}\leq C_i-1~~~~for~all~S_i\in S
\end{equation}
where, for all $S_i\in S$, $C_i=|S_i|$. Indeed, If $X_{S_i}=1$ then, using inequality~(\ref{eq2}), we have, for all $e\in S_i$, $Y_e=1$.
Otherwise, we have $X_{S_i}=0$ and, using inequality~(\ref{eq3}), there exists $e\in S_i$ such that $Y_e=0$.\\

Finally, the objective function is defined as follows: 
\begin{equation}
\label{eq4}
\max~\sum_{S_i\in S} X_{S_i}
\end{equation}

\begin{proposition}
The linear inequalities in $(\ref{eq1})$, $(\ref{eq2})$ and $(\ref{eq3})$ with the objective function $(\ref{eq4})$ is a correct encoding 
of MCSP.
\end{proposition}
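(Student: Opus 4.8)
The plan is to prove a biconditional, establishing that the integer program's feasible solutions correspond exactly to closed set packings, and that the objective then computes the maximum cardinality. First I would clarify the intended semantics: a $0/1$ assignment to the $X_{S_i}$ variables is meant to encode a candidate subfamily $P = \{S_i \mid X_{S_i} = 1\}$, while the $Y_e$ variables encode the set $\bigcup_{P_i \in P} P_i$ of covered elements. So the proof naturally splits into verifying that (a) the constraints force $P$ to be a set packing, (b) the constraints correctly compute the union via the $Y_e$ variables, and (c) the constraints force the closedness condition, after which maximizing $\sum X_{S_i}$ clearly returns the maximum cardinality closed set packing.

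\paragraph{Set packing from \eqref{eq1}.} For any element $e \in U$, inequality \eqref{eq1} states that at most one selected subset contains $e$. I would argue that this is exactly the pairwise-disjointness condition: if two distinct selected subsets $S_i, S_j \in P$ shared a common element $e$, then the sum $\sum_{e \in S_i, S_i \in S} X_{S_i}$ would be at least $2$, violating \eqref{eq1}; conversely, if $P$ is a set packing then every element lies in at most one member of $P$, so all these sums are at most $1$.

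\paragraph{Union computation and closedness.} The main technical content lies in inequalities \eqref{eq2} and \eqref{eq3}, whose stated purpose (as the text already explains) is to enforce $X_{S_i} = 1 \iff Y_e = 1$ for all $e \in S_i$. I would verify both directions carefully using $C_i = |S_i|$: when $X_{S_i} = 1$, \eqref{eq2} reads $(\sum_{e \in S_i} Y_e) \ge C_i$, which, since each $Y_e \le 1$ and there are exactly $C_i$ terms, forces every $Y_e = 1$; when $X_{S_i} = 0$, \eqref{eq3} reads $\sum_{e \in S_i} Y_e \le C_i - 1$, forcing at least one $Y_e = 0$, i.e. $S_i$ is \emph{not} fully covered. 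The second reading is precisely the closedness condition: a non-selected subset $S_i \in S \setminus P$ must fail to be contained in the union, since $S_i \subseteq \bigcup_{P_j \in P} P_j$ would mean every $e \in S_i$ is covered. Here I must be attentive to the coupling: the $Y_e$ variables are not independently declared equal to the union indicator but are only constrained through \eqref{eq2}–\eqref{eq3}, so I should check that any feasible $Y$ satisfying these constraints, combined with a valid $X$, genuinely represents the union $\bigcup_{P_j \in P} P_j$ on the relevant elements, rather than permitting a spurious $Y_e = 1$ for an uncovered $e$ that would still satisfy all constraints.

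\paragraph{Main obstacle.} I expect the subtle step to be precisely this last point about whether \eqref{eq2}–\eqref{eq3} pin down the $Y_e$ values uniquely, or leave slack that a maximizing solution might exploit. The resolution is that for the correctness of the \emph{maximum} it suffices that every closed set packing admits a feasible extension $(X, Y)$ with matching objective value, and conversely every feasible $(X, Y)$ yields a closed set packing of the same objective value; exact determinacy of $Y_e$ for uncovered elements $e$ is irrelevant to the objective $\sum X_{S_i}$, so I would frame the argument as a correspondence between feasible points and closed set packings that preserves the objective, and conclude that the optimum of \eqref{eq4} equals the MCSP value. Finally, I would note the side condition that for an element $e$ belonging to no subset the constraint \eqref{eq1} is vacuous and $Y_e$ is free, which does not affect the objective.
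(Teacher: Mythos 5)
Your proposal is correct and takes essentially the same route as the paper's proof: pairwise disjointness from (\ref{eq1}), the biconditional $X_{S_i}=1$ iff $Y_e=1$ for all $e\in S_i$ from (\ref{eq2})--(\ref{eq3}) yielding closedness, and maximality from (\ref{eq4}). The one place you go beyond the paper is worth noting: the paper only verifies that every optimal ILP solution induces a closed set packing and leaves implicit the converse lifting of an arbitrary closed set packing $P$ to a feasible point (set $Y_e=1$ exactly on $\bigcup_{P_i\in P}P_i$), which is genuinely needed to conclude the ILP optimum does not undercount the MCSP value; likewise your observation that a spurious $Y_e=1$ on an uncovered element is harmless --- since the closedness argument only uses the implication $Y_e=0\Rightarrow e\notin\bigcup_{P_i\in P}P_i$, guaranteed by (\ref{eq2}) --- correctly discharges the determinacy worry that the paper's proof does not raise.
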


\begin{proof}
Let $P$ be a subset of $S$ that corresponds to a solution of the linear integer program. 
Using the inequalities in (\ref{eq1}), we have, for all $S_i,S_j\in P$ with $S_i\neq S_j$, $S_i\cap S_j=\emptyset$.
Thus, $P$ corresponds to a set packing. 
Using the inequalities $(\ref{eq2})$ and $(\ref{eq3})$, we have, for all $S_i\in S$, $X_{S_i}=1$ if and only if, for all $e\in S_i$, $Y_e=1$.
Hence, for all $S_i\in S\setminus P$, there exists $e\in S_i$ such that $Y_e=0$, so $S_i$ is not a subset of $\bigcup_{P_i\in P} P_i$.
Therefore, $P$ is a closed set packing. Finally, from maximizing the objective function in $(\ref{eq4})$, we deduce that $P$ is a solution  of the maximum closed set packing for $(U,S)$.
\end{proof}


\subsection{MinCostSAT Formulation of MCSP  }

In this section, we describe our encoding of the maximum closed set packing problem as a MinCostSAT instance \cite{MiyazakiIK96}. 

\begin{definition}[MinCostSAT]
Let $\Phi$ be a CNF formula and $f$ a cost function that associates a non-negative cost to each variable in $Var(\Phi)$.
The MinCostSAT problem is the problem of finding a model for $\Phi$ that minimizes the objective function:
$$\sum_{p\in {Var(\Phi)}} f(p )$$
\end{definition}

Let $U$ be a universe and $S$ a set of subsets of $U$. 
We associate a boolean variable $X_{S_i}$  (resp. $Y_e$) to each $S_i\in S$ (resp. $e\in U$).
The inequalities in $(\ref{eq1})$ in our previous integer linear program correspond to  instances of the  AtMostOne constraint
which is a special case of the  well-known cardinality constraint.  
Several efficient encodings of the cardinality constraint to CNF have been proposed, most of them try 
to improve the efficiency of constraint propagation (e.g. \cite{BailleuxB03,Sinz05}).
We here consider the encoding using sequential counter~\cite{Sinz05,SilvaL07}. 
In this case, the inequality $\sum\limits_{e\in S_i, S_i\in S} X_{S_i}\leq 1$ is encoded as follows
(we fix $\sum\limits_{e\in S_i, S_i\in S} X_{S_i}=\sum\limits_{1\leq i\leq n} X_{S_i}$):

\begin{equation}
\label{atmost}
\begin{split}
(\neg X_{S_1} \vee p_1) \wedge (\neg X_{S_n} \vee \neg p_{n-1}) \\ \bigwedge_{1< i < n} ((\neg X_{S_i} \vee p_i) \wedge (\neg p_{i-1} \vee p_i) \wedge (\neg X_{S_i} \vee \neg p_{i-1}))
\end{split}
\end{equation}
where $p_i$ is a fresh boolean variable for all $1\leq i\leq n-1$.

Regarding to the inequalities in~(\ref{eq2}), it can be encoded by the following clauses:
\begin{equation}
\label{ceq2}
\bigwedge_{S_i\in S}\bigwedge_{e\in S_i}\neg X_{S_i}\vee Y_e
\end{equation}
Indeed, these clauses are equivalent to the following ones: 
\begin{equation*}
\bigwedge_{S_i\in S}X_{S_i}\rightarrow \bigwedge_{e\in S_i} Y_e
\end{equation*}
The inequalities in $(\ref{eq3})$ can be simply encoded as: 
\begin{equation}
\label{ceq3}
\bigwedge_{S_i\in S}(X_{S_i}\vee\bigvee_{e\in S_i}\neg Y_e)
\end{equation}

Contrary to MCSP,  the optimization process in MinCostSAT consists in minimizing the objective function. 
In order to encode MCSP as an MinCostSAT instance, we rename each variable $X_{S_i}$ with $\neg X_{S_i}'$  ($X_{S_i}'$ is a fresh boolean variable)
in (\ref{atmost}), (\ref{ceq2}) and (\ref{ceq3}),  for all $S_i\in S$. 
The MinCostSAT instance encoding the maximum closed set packing problem for $(U,S)$ is 
${\cal M}_{(U,S)}=(\Phi, f)$ where $\Phi$ is the CNF formula obtained from $(\ref{atmost})\wedge(\ref{ceq2})\wedge(\ref{ceq3})$ by the renaming described previously  
 and  $f$ is defined as follows:
\begin{itemize}
\item for all $S_i\in  S$, $f(X_{S_i}')=1$;  and 
\item for all $v\in Var(\Phi)\setminus\{X_{S_i}'\mid S_i\in S\}$, $f(v) = 0$.
\end{itemize}
Note that the optimization process in $\cal M$ consists in minimizing $\sum_{S_i\in S} X_{S_i}'$ and that corresponds 
to maximizing $\sum_{S_i\in S} X_{S_i}$.

\section{Experimental Results}
\label{sec:experiments}
In this section, we present a preliminary experimental evaluation of our proposed approach. 
All experiments was performed on a Xeon 3.2GHz (2 GB RAM) cluster.  


We conduced two kinds of experiments. The first one deals with instances coming from classical MUSes enumeration problem.
For this category we use two complementary state-of-the art MUSes enumeration
solvers and then we apply  our encoding into MCSP  to compute the values
of  $I_{{\cal D}}$. When enumerating all MUSes is infeasible we use {\tt eMUS} \cite{PrevitiM13} 
instead of  {\tt camus} \cite{LiffitonS08} to enumerate a subset of MUSes.  Indeed, {\tt eMUS}  is a real time solver that   
outperforms {\tt camus}  when we deal with partial MUSes enumeration. The instances where 
{\tt eMUS} is used are indicated with an asterisk.

In the second experiment, the instances are randomly generated. To represent a KB with $n$ formulae involving $m$ MUSes,  called {\tt mfsp\_m\_n},  we first generate randomly a family of sets $\{S_1, \ldots, S_m\}$ of positive integers from the interval $[1\dots n]$.  We suppose that each set $S_i$ of numbers represents a MUS. We randomly set the size of $S_i$. In our experiments, we consider  $1< |S_i|\leq 3$.



In Table \ref{tab:exp}, for each instance, we report  the number of MUSes ($\#mus$),  
the value of the inconsistency measure ($I_{{\cal D}}$) and the time ($time$ in seconds) needed
to compute $I_{{\cal D}}$ . To solve the encoded instances, we use ${\tt wpmaxsatz}$ Partial MaxSAT solver \cite{ArgelichDLMP06}.

As we can observe, the value $I_{{\cal D}}$  is much smaller than the number of MUSes. Furthermore, the computation time globally increases as $I_{{\cal D}}$ increases. Note that for instances whose  $I_{{\cal D}}$ value  is equal to 1, it means that they are strongly interconnected.  

 \begin{table}[htbp]
 \centering
 \begin{scriptsize}
 \begin{tabular}{l||l|l|l}
\hline
 Instance & $\#mus$   & $I_{{\cal D}}$ & $time(s)$ \\
 \hline

 C168\_FW\_UT\_851 &102 &1&1 \\
 C220\_FV\_RZ\_13 &6772 &1&5.4\\
 c880\_gr\_rcs\_w5.shufﬂed  &70&1&3.7\\
 rocket\_ext.b  &75&1&1\\
c7552-bug-gate-0$^{*}$  & 1000 & 1& 5.3\\
apex\_gr\_2pin\_w4.shuffled$^{*}$ & 1500 & 2 & 120.23 \\ 
wb\_conmax1.dimacs.filtered$^{*}$ & 20 & 2 & 0.9 \\
wb\_4m8s4.dimacs.filtered$^{*}$  &20 & 9& 1.44\\
\hline
mfsp\_50\_20         &  50          &   5 & 0.01        \\
mfsp\_100\_50       & 100           & 22 & 0.36       \\
mfsp\_120\_60       &120           & 15 & 1.49       \\
mfsp\_120\_80        & 120        & 20 & 13.78    \\
mfsp\_150\_60       & 150        & 11 & 1.50         \\
mfsp\_150\_100     & 150        & 22 & 127.57    \\
mfsp\_150\_150     & 150           & 35 & 347.98     \\
mfsp\_200\_50       & 200    &  11 & 4.79      \\
\hline
 \end{tabular}
 \end{scriptsize}
 
 \caption{Computation of $I_{{\cal D}}$ (real-world and random instances) \label{tab:exp}}
 \end{table}

%
%
%



\section{Conclusion}\label{sec:conclusion}

We  studied in this paper a new framework for characterizing inconsistency based on the proposed independent decomposability property and MUS-decomposition. Such defined inconsistency measures (I.e. $I'_M$ and $I_{\cal D}$) are shown with desired properties. The distributable MUS-decomposition allows to resolve inconsistencies in a parallel way, which is a rarely considered methodology for handling large knowledge bases with important informations. Complexity and practical algorithms are studied based on the advance of MUS enumeration.  We will study the lower bound complexity of the measure and explore applications of the proposed methodology in the future. 

%


\end{document}